\newcommand{\executeiffilenewer}[3]{%
\ifnum\pdfstrcmp{\pdffilemoddate{#1}}%
{\pdffilemoddate{#2}}>0%
{\immediate\write18{#3}}\fi%
}
\algrenewcommand\algorithmicdo{}
\newtheorem{theorem}{Theorem}
\newtheorem{proposition}{Proposition}
\newtheorem{lemma}{Lemma}
\newtheorem{assumption}{Assumption}
\newtheorem{definition}{Definition}
\theoremstyle{definition}
\DeclareMathOperator{\E}{\mathbb{E}}
\DeclareMathOperator*{\argmin}{argmin}
\DeclareMathOperator*{\argmax}{argmax}
\DeclareMathOperator*{\minimize}{minimize}
\DeclareMathOperator*{\maximize}{maximize}
\DeclareMathOperator{\subjectto}{subject\ to}
\newcommand{\multiset}[2]{%
\mathchoice{\left(\kern-0.5em{\binom{#1}{#2}}\kern-0.5em\right)}
           {\bigl(\kern-0.3em{\binom{#1}{#2}}\kern-0.3em\bigr)}
           {\bigl(\kern-0.3em{\binom{#1}{#2}}\kern-0.3em\bigr)}
           {\bigl(\kern-0.3em{\binom{#1}{#2}}\kern-0.3em\bigr)}}
\newcommand{\calA}{\ensuremath{\mathcal{A}}}
\newcommand{\calB}{\ensuremath{\mathcal{B}}}
\newcommand{\calL}{\ensuremath{\mathcal{L}}}
\newcommand{\calM}{\ensuremath{\mathcal{M}}}
\newcommand{\calP}{\ensuremath{\mathcal{P}}}
\newcommand{\calR}{\ensuremath{\mathcal{R}}}
\newcommand{\calS}{\ensuremath{\mathcal{S}}}
\newcommand{\setR}{\ensuremath{\mathbb{R}}}
\def\st/{\textsuperscript{st}}
\def\nd/{\textsuperscript{nd}}
\def\rd/{\textsuperscript{rd}}
\def\th/{\textsuperscript{th}}
\def\nnil{\nil}
\newcounter{prob}
\newenvironment{prob}[1][\nil]{%
	\def\tmp{#1}
	\equation
	\ifx\tmp\nnil
		\refstepcounter{prob}
		\tag{P\Roman{prob}}
	\else
		\tag{\tmp}
	\fi
	\aligned%
}{%
	\endaligned\endequation%
}
\newif{\ifappend}
\title{Constrained Reinforcement Learning Has Zero Duality Gap}
\author{Santiago Paternain, Luiz~F.~O.~Chamon, Miguel Calvo-Fullana and Alejandro~Ribeiro\\
  Electrical and Systems Engineering\\
  University of Pennsylvania\\
  \texttt{\{spater,luizf,cfullana,aribeiro\}@seas.upenn.edu}
}
\begin{document}
\maketitle

\begin{abstract}

  Autonomous agents must often deal with conflicting requirements, such as completing tasks using the least amount of time/energy, learning multiple tasks, or dealing with multiple opponents. In the context of reinforcement learning~(RL), these problems are addressed by (i)~designing a reward function that simultaneously describes all requirements or (ii)~combining modular value functions that encode them individually. Though effective, these methods have critical downsides. Designing good reward functions that balance different objectives is challenging, especially as the number of objectives grows. Moreover, implicit interference between goals may lead to performance plateaus as they compete for resources, particularly when training on-policy. Similarly, selecting parameters to combine value functions is at least as hard as designing an all-encompassing reward, given that the effect of their values on the overall policy is not straightforward. The later is generally addressed by formulating the conflicting requirements as a constrained RL problem and solved using Primal-Dual methods. These algorithms are in general not guaranteed to converge to the optimal solution since the problem is not convex. This work provides theoretical support to these approaches by establishing that despite its non-convexity, this problem has zero duality gap, i.e., it can be solved exactly in the dual domain, where it becomes convex. Finally, we show this result basically holds if the policy is described by a good parametrization~(e.g., neural networks) and we connect this result with primal-dual algorithms present in the literature and we establish the convergence to the optimal solution. 

\end{abstract}

\section{Introduction}

Autonomous agents must often deal with conflicting requirements, such as completing a task in the least amount of time/energy, learning multiple tasks or contexts, dealing with multiple opponents or with several specifications that are designed to guide the agent in the learning process. In the context of reinforcement learning~\cite{sutton2018reinforcement}, these problems are generally addressed by 
combining modular value functions that encode them individually, by multiplying each signal by its own coefficient, which controls the emphasis placed on it~\cite{borkar2005actor,achiam2017constrained,peng2018deepmimic}. Although effective, the multi-objective problem~\cite{mannor2004geometric} has several downsides. First, for each set of penalty coefficients, there exists a different, optimal solution, also known as Pareto optimality~\cite{van2014multi}.  In practice, the exact coefficient is selected through a time consuming and a computationally intensive process of hyper-parameter tuning that often times are domain dependent, as showed in \cite{tessler2018reward,leike2017ai,mania2018simple}. Moreover, implicit interference between the goals may lead to training plateaus as they compete for resources in the policy~\cite{schaul2019ray}.


{  An alternative, is to embed all conflicting requirements in a constrained RL problem and to use a primal-dual algorithm as in~\cite{bhatnagar2012online, tessler2018reward} that chooses the parameters automatically. }The main advantage of this approach is that constraints ensure satisfying behavior without the need for manually selecting the penalty coefficients. In these algorithms the policy update is on a faster time-scale than the multiplier update. Thus, effectively, these approaches work as if the dual problem of the constrained reinforcement learning problem was being solved.  Thus, guaranteeing to obtain the feasible solution with the smallest suboptimality. Yet, there is no guarantee on how small the suboptimality is. In this work we provide an answer to the previous question. In particular we establish that: 
  \begin{enumerate}
  \item Despite its non-convexity, constrained reinforcement learning for policies belonging to a general distribution class has zero duality gap, i.e., it can be solved exactly in the dual domain, where the problem is actually convex
  \item Since working with generic distributions as policies is in general intractable, we extend this result to parametrized policies, by showing that the suboptimality bound also holds when the parametrization is a universal approximator, e.g., a neural network \cite{hornik1989multilayer}).
  \item 
    {We leverage these theoretical results to establish that the family of primal-dual algorithms for constrained reinforcement learning, e.g. \cite{bhatnagar2012online,tessler2018reward}, in fact converge to the optimal solution under mild assumptions. }
  \end{enumerate}
  %

\subsection{Related Work}
Constrained Markov Decision Processes (CMDPs)~\cite{altman1999constrained} are an active field
of research. CMDP applications cover a vast number of topics, such as: electric grids
\cite{koutsopoulos2011control}, networking \cite{hou2018optimization}, robotics \cite{chow2015risk,gu2017deep,achiam2017constrained}  and finance \cite{krokhmal2002portfolio,di2012policy}. The most common approaches to solve this problems can be divided under the following categories. \bf{Manual selection of Lagrange multipliers:} \normalfont constrained Reinforcement Learning problems can be solved through by maximizing an unconstrained Lagrangian, for a specific multiplier~\cite{borkar2005actor}. The combination of different rewards with manually selected Lagrange multipliers has been applied for instance to learning complex movements for humanoids \cite{peng2018deepmimic} or to limit the variance of the constraint that needs to be satisfied \cite{di2012policy,tamar2013variance}.  \bf{Integrating prior knowledge} \normalfont about the system transitions is exploited in order to project the action chosen by the policy to a set that ensures the satisfaction of the constraints \cite{dalal2018safe}. 
     {\bf{Primal-dual algorithms} \normalfont  \cite{bhatnagar2012online,tessler2018reward}, allow us to choose dynamically the multipliers by find the best policy for the current set of parameters and then taking steps along the gradient of the Lagrangian with respect to the multipliers. These allow to consider general constraints and the algorithm is reward agnostic and it does not require the use of prior knowledge.  }

\section{Constrained Reinforcement Learning}
\label{S:constrainedRL}

Let~$t \in \mathbb{N} \cup \{0\}$ denote the time instant and~$\calS \subset \setR^n$ and~$\calA \subset \setR^d$ be compact sets describing the possible states and actions of an agent described by a Markovian dynamical system with transition probability density~$p$, i.e., $p\left(s_{t+1} \mid \{s_{u},a_{u}\}_{u \leq t}\right) = p\left(s_{t+1} \mid s_{t},a_t\right)$ for~$s_t \in \calS$ and~$a_t \in \calA$ for all~$t$. The agent chooses actions sequentially based on a policy~$\pi \in \calP(\calS)$, where $\calP(\calS)$ is the space of probability measures on~$(\calA, \calB(\calA))$ parametrized by elements of~$\calS$, where~$\calB(\calA)$ are the Borel sets of~$\calA$. The action taken by the agent at each state results in rewards defined by the functions~$r_i: \calS \times \calA \to \mathbb{R}$, for~$i = 0,\ldots, m$, that the agent accumulates over time. These rewards describe different objectives that the agent must achieve, such as completing a task, remaining within a region of the state space, or not running out of battery. The goal of constrained RL is then to find a policy~$\pi^\star \in \calP(\calS)$ that meets these objectives by solving the problem
\begin{prob}\label{P:constrainedRL}
	P^\star \triangleq \max_{\pi \in \calP(\calS)}& &V_0(\pi) &\triangleq
		\E_{s,\pi} \left[ \sum_{t = 0}^\infty \gamma^t r_0(s_t,\pi(s_t)) \right]
	\\
	\subjectto& &V_i(\pi) &\triangleq
		\E_{s,\pi} \left[ \sum_{t = 0}^\infty \gamma^t r_i(s_t,\pi(s_t)) \right] \geq c_i
		\text{,} \, i = 1, \ldots, m
		\text{,}
\end{prob}
where~$\gamma \in (0,1)$ is a discount factor and~$c_i \in \setR$ represent the~$i$-th reward specification. It is important to contrast the formulation in~\eqref{P:constrainedRL} with the unconstrained, regularized problem commonly found in the literature~\cite{di2012policy,tamar2013variance,peng2018deepmimic}
\begin{prob}[$\tilde{\text{PI}}$]\label{P:regularizedRL}
	\maximize_{\pi \in \calP(\calS)}& &V_0(\pi) + \sum_{i = 1}^m w_i \left(V_i(\pi)-c_i\right)
		\text{,}
\end{prob}
where~$w_i \geq 0$ are the regularization parameters. First,~\eqref{P:constrainedRL} precludes the manual balancing of different requirements through the choice of~$w_i$. Even with expert knowledge, tuning these parameters can be as hard as solving the RL problem itself, since there is no straightforward relation between the value of~$w_i$ and the value~$V_i(\pi^\star)$ given by the final policy. What is more, note that the objective of~\eqref{P:regularizedRL} can be written as a single value function~$\bar{V}(\pi) \triangleq \E_{s,\pi} \left[ \sum_{t = 0}^\infty \gamma^t \bar{r}(s_t,\pi(s_t)) \right]$ for~$\bar{r}(s_t,\pi(s_t)) = r_0(s_t,\pi(s_t)) + \sum_{i = 1}^m w_i r_i(s_t,\pi(s_t))$. In other words, choosing the value of~$w_i$ amounts to designing a reward that simultaneously encodes different, possibly conflicting, objectives and/or requirements. Given the challenge that can be designing good reward functions for a single task, it is ready that this regularized approach is neither efficient nor effective.

Though promising, solving the constrained RL problem in~\eqref{P:constrainedRL} is intricate. Indeed, it is both infinite dimensional and non-convex, so that it is in general not tractable in the primal domain. Its dual problem, on the other hand, is convex and has dimensionality equal to the number of constraints. However, since~\eqref{P:constrainedRL} is not a convex program, its dual problem in general only provides an upper bound on~$P^\star$. How good the policy obtained by solving the dual problem is depends on the tightness of this bound. What is more, formulating the problem in the dual domain is at least as hard as solving~\eqref{P:regularizedRL}, which is also infinite dimensional and non-convex. In the sequel, we address these two issues by first showing that~\eqref{P:constrainedRL} has no duality gap~(Section~\ref{S:zdg}), i.e., that the upper bound on~$P^\star$ from the dual problem is tight. This implies that~\eqref{P:constrainedRL} can be solved exactly in the dual domain. Then, we show that we lose~(almost) nothing by parametrizing the policies~$\pi$~(Section~\ref{S:params}), which immediately addresses the issue of dimensionality in~\eqref{P:constrainedRL}--\eqref{P:regularizedRL}. Finally, we put forward and analyze a primal-dual algorithm for constrained RL~(Section~\ref{S:pd}), showing that under mild conditions it yields a locally optimal, feasible solution of~\eqref{P:constrainedRL}.

\section{Constrained Reinforcement Learning Has Zero Duality Gap}
\label{S:zdg}

Let us start by formalizing the concept of dual problem. Let the vector~${\lambda} \in \mathbb{R}^m_+$ collect the Lagrange multipliers of the constraints of~\eqref{P:constrainedRL} and define its Lagrangian as
\begin{equation}\label{E:lagrangian}
	\calL(\pi,{\lambda}) \triangleq V_0(\pi) + \sum_{i=1}^m \lambda_i \left(V_i(\pi)-c_i\right)
		\text{.}
\end{equation}
The dual function is then the point-wise maximum of~\eqref{E:lagrangian} with respect to the policy~$\pi$, i.e.,
\begin{equation}\label{E:dual_function}
	d({\lambda}) \triangleq \max_{\pi\in\calP(\calS)} \calL(\pi,{\lambda})
		\text{.}
\end{equation}
The dual function~\eqref{E:dual_function} provides an upper bounds on the value of~\eqref{P:constrainedRL}, i.e., $d({\lambda}) \geq P^\star$ for all~${\lambda} \in\mathbb{R}_+^m$~\cite[Section 5.1.3]{boyd04c}. The tighter the bound, the closer the policy obtained from~\eqref{E:dual_function} is to the optimal solution of~\eqref{P:constrainedRL}. Hence, the dual problem is that of finding the tightest of these bounds:
\begin{prob}[${\text{DI}}$]\label{P:dualRL}
	D^\star \triangleq \min_{{\lambda} \in \mathbb{R}^m_+} d({\lambda})
		\text{.}
\end{prob}
Note that the dual function~\eqref{E:dual_function} can be related to the unconstrained, regularized problem~\eqref{P:regularizedRL} from Section~\ref{S:constrainedRL} by taking~$\lambda_i = w_i$ in~\eqref{E:lagrangian}. Hence, \eqref{E:dual_function} takes on the optimal value of~\eqref{P:regularizedRL} for all possible regularization parameters. Problem~\eqref{P:dualRL} then finds the best regularized problem, i.e., that whose value is closest to~$P^\star$. It turns out, this problem is tractable if~$d({\lambda})$ can be evaluated, since~\eqref{P:dualRL} is a convex program~(the dual function is the point-wise maximum of a set of linear functions and is therefore convex)~\cite[Section 3.2.3]{boyd04c}.

{Despite these similarities, \eqref{P:dualRL}~[and consequently~\eqref{P:regularizedRL}] do not necessarily solve the same problem as~\eqref{P:constrainedRL}. In other words, there need not be a relation between the optimal dual variables~${\lambda}^\star$ from~\eqref{P:dualRL} or the regularization parameters~$w_i$ and the specifications~$c_i$ of~\eqref{P:constrainedRL}. This depends on the value of the duality gap~$\Delta = D^\star-P^\star$. Indeed, if~$\Delta$ is small, then so is the suboptimality 
  of the policies obtained from~\eqref{P:dualRL}. In the limit case where~$\Delta = 0$, 
  {problems~\eqref{P:constrainedRL}--\eqref{P:dualRL} and~\eqref{P:regularizedRL} would all be essentially equivalent.} Since~\eqref{P:constrainedRL} is not a convex program, however, this result does not hold immediately. Still, we calim in Theorem~\ref{T:strongDuality} that~\eqref{P:constrainedRL} has zero duality gap under Slater's conditions. Before stating the Theorem we define the perturbation function associated to problem \eqref{P:constrainedRL} which is fundamental for the proof of the result and for future reference. For any $\xi\in\mathbb{R}^n$, the perturbation function associated to \eqref{P:constrainedRL} is defined as 
\begin{prob}[${\text{PI}}^\prime$]\label{P:perturbed}
	P(\xi) \triangleq&\max_{\pi \in \calP(\calS)}V_0(\pi)
	\\
	&\subjectto V_i(\pi) \geq c_i+\xi_i, \,  i=1 \ldots m.
\end{prob}
Notice that~$P(0) = P^\star$, the optimal value of~\eqref{P:constrainedRL}. We formally state next the conditions under which Problem \eqref{P:constrainedRL} has zero duality gap. 
}
\begin{theorem}\label{T:strongDuality}
Suppose that~$r_i$ is bounded for all~$i = 0, \ldots, m$ and that Slater's condition holds for~\eqref{P:constrainedRL}. Then, strong duality holds for~\eqref{P:constrainedRL}, i.e., $P^\star = D^\star$.
\end{theorem}
{
\begin{proof}
  This proof relies on a well-known result from perturbation theory connecting strong duality to the convexity of the perturbation function defined in\eqref{P:perturbed}. We formalize this result next.
\begin{proposition}[Fenchel-Moreau]\label{T:perturbation}
If (i)~Slater's condition holds for~\eqref{P:constrainedRL} and (ii)~its perturbation function~$P(\xi)$ is concave, then strong duality holds for~\eqref{P:constrainedRL}.
\end{proposition}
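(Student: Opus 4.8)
The plan is to establish the reverse of weak duality. Since the dual function already satisfies $d(\lambda) \geq P^\star$ for every $\lambda \in \setR_+^m$, it suffices to exhibit a single multiplier $\lambda^\star \in \setR_+^m$ with $d(\lambda^\star) = P^\star$; this gives $D^\star \leq P^\star$ and, combined with weak duality, the equality $P^\star = D^\star$. The central idea is to read off such a $\lambda^\star$ from a supporting hyperplane to the graph of the concave perturbation function $P(\xi)$ at $\xi = 0$.

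First I would record the link between the dual function and $P$. For any $\lambda \in \setR_+^m$, a short two-sided argument shows
\[
	d(\lambda) = \sup_{\xi \in \setR^m} \big[\, P(\xi) + \lambda^\top \xi \,\big].
\]
Indeed, for fixed $\pi$ the choice $\xi_i = V_i(\pi) - c_i$ makes $\pi$ feasible for the perturbed problem~\eqref{P:perturbed}, so $\calL(\pi,\lambda) = V_0(\pi) + \lambda^\top\xi \le P(\xi) + \lambda^\top\xi$; conversely, for any $\xi$ and any $\pi$ feasible for $\xi$, nonnegativity of $\lambda$ yields $\lambda_i(V_i(\pi)-c_i) \ge \lambda_i \xi_i$, hence $P(\xi) + \lambda^\top\xi \le d(\lambda)$. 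Taking suprema in each direction gives the displayed identity.

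Next I would invoke the two hypotheses. Boundedness of the rewards together with $\gamma \in (0,1)$ makes every $V_i$ finite, so $P(0) = P^\star$ is finite; moreover a strictly feasible policy $\pi_0$ with $V_i(\pi_0) > c_i$ (Slater) keeps~\eqref{P:perturbed} feasible for all sufficiently small $\xi$, placing $0$ in the interior of $\operatorname{dom} P$. A proper concave function that is finite on a neighborhood of a point has a nonempty superdifferential there, so there exists $g$ with $P(\xi) \le P(0) + g^\top \xi$ for all $\xi$. Because tightening a constraint can only lower the optimal value, $P$ is nonincreasing in each coordinate; testing the supporting inequality along the relaxing directions $\xi = -t\,e_i$, $t > 0$, against $P(-t\,e_i) \ge P(0)$ forces $g \le 0$. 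I then set $\lambda^\star = -g \in \setR_+^m$, which is dual-feasible.

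Finally, substituting $\lambda^\star$ into the supporting inequality gives $P(\xi) + (\lambda^\star)^\top \xi \le P(0)$ for all $\xi$, so $d(\lambda^\star) = \sup_\xi [\,P(\xi) + (\lambda^\star)^\top \xi\,] \le P(0)$; evaluating at $\xi = 0$ shows the supremum is attained and equals $P(0) = P^\star$. Hence $D^\star \le d(\lambda^\star) = P^\star$, and with weak duality $D^\star \ge P^\star$ the claim $P^\star = D^\star$ follows. The crux — and the one place where both hypotheses are indispensable — is producing a supergradient of the correct sign: concavity of $P$ supplies the supporting hyperplane, Slater's condition guarantees it is finite by putting $0$ in the interior of $\operatorname{dom} P$, and monotonicity of $P$ fixes its orientation so that the resulting $\lambda^\star$ is nonnegative.
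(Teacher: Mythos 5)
Your proof is correct. Note, however, that the paper does not actually prove this proposition: its ``proof'' is a pointer to \cite[Cor.~30.2.2]{Rockafellar70c}, so your argument is a genuinely different (self-contained) route --- essentially an unpacking, in the paper's maximization convention, of the perturbation/conjugate-duality machinery that the cited result encapsulates. Your key identity $d(\lambda) = \sup_{\xi} \left[ P(\xi) + \lambda^\top \xi \right]$ exhibits the dual function as a concave conjugate of the perturbation function, and both directions of your two-sided verification are sound; strong duality then becomes the statement that this conjugacy is tight at $\xi = 0$, which you secure by producing a supergradient $g$ of the concave function $P$ at $0$ (existence because Slater's condition places $0$ in the interior of the domain of $P$, the correct sign $g \leq 0$ because $P$ is nonincreasing in each coordinate) and setting $\lambda^\star = -g$. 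What the paper's citation buys is brevity and generality; what your proof buys is transparency --- it shows exactly where each hypothesis enters, and it identifies the optimal multiplier concretely as a negated supergradient of $P$ at $0$, i.e., as a sensitivity of the optimal value to constraint perturbations, which is a useful interpretation the citation hides. One point worth flagging: the proposition as stated assumes only Slater's condition and concavity of $P$, yet your supergradient argument also needs $P$ to be proper (in particular $P(0) = P^\star$ finite and $P$ bounded above); you import this from the boundedness of the rewards, which is a hypothesis of Theorem~\ref{T:strongDuality} rather than of the proposition itself. Within the paper's setting this is harmless, since the proposition is only invoked under those hypotheses, but a fully rigorous standalone statement should either add properness of $P$ as an assumption or note explicitly, as you do in passing, that bounded rewards and $\gamma \in (0,1)$ deliver it.
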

\begin{proof}
See, e.g., \cite[Cor.~30.2.2]{Rockafellar70c}.
\end{proof}

Condition~(i) of Proposition~\ref{T:perturbation} is satisfied by the hypotheses of Theorem~\ref{T:strongDuality}. It suffices then to show that the perturbation function is concave~[(ii)], i.e., that for every~$\xi^1,\xi^2 \in \mathbb{R}^m$, and~$\mu \in (0,1)$,
\begin{equation}\label{E:convexPF}
	P\left[ \mu \xi^1 + (1-\mu) \xi^2 \right]
		\geq \mu P\left( \xi^1 \right) + (1-\mu) P\left( \xi^2 \right)
		\text{.}
\end{equation}
If for either perturbation $\xi^1$ or $\xi^2$ the problem becomes infeasible then $P(\xi^1)=-\infty$ or $P(\xi^2)=-\infty$ and thus \eqref{E:convexPF} holds trivially. For perturbations that keep the problem feasible, suppose~$P(\xi^1)$ and~$P(\xi^2)$ are achieved by the policies~$\pi_1 \in \calP(\calS)$ and~$\pi_2 \in \calP(\calS)$ respectively. Then, $P(\xi^1) = V_0(\pi_1)$ with~$V_i(\pi_1) -c_i\geq \xi^1_i$ and~$P(\xi^2) = V_0(\pi_2)$ with~$V_i(\pi_2) -c_i\geq \xi^2_i$ for $i=1,\ldots,m$. To establish \eqref{E:convexPF} it suffices to show that for every $\mu\in(0,1)$ there exists a policy $\pi_{\mu}$ such that $V_i(\pi_{\mu}) -c_i \geq \mu\xi^1_i+(1-\mu)\xi^2_i$ and $V_0(\pi_{\mu}) = \mu V_0(\pi_1)+(1-\mu)V_0(\pi_2)$. Notice that any policy $\pi_\mu$ satisfying the previous conditions is a feasible policy for the slack $c_i +\mu \xi^1_i+(1-\mu)\xi^2_i$. Hence, by definition of the perturbed function~\eqref{P:perturbed}, it follows that
\begin{equation}
P\left[ \mu \xi^1 + (1-\mu) \xi^2 \right] \geq V_0(\pi_{\mu}) =\mu V_0(\pi_1)+(1-\mu)V_0(\pi_2)=\mu P\left( \xi^1 \right) + (1-\mu) P\left( \xi^2 \right).
  \end{equation}
If such policy exists, the previous equation implies \eqref{E:convexPF}. Thus, to complete the proof of the result we need to establish its existence. To do so we start by formulating a linear program equivalent to \eqref{P:perturbed}. Notice that for any $i=0,\ldots, m$ we can write
\begin{equation}
V_i(\pi) = \int_{(\calS\times\calA)^\infty} \left(\sum_{t=0}^\infty \gamma^t r_i(s_t,a_t)\right)p_\pi(s_0,a_0,\ldots)\, ds_0\ldots da_0\ldots.
  \end{equation}
Since the reward functions are bounded the Dominated Convergence Theorem holds. This allows us to exchange the order of the sum and the integral. Moreover, using conditional probabilities and the Markov property of the transition of the system we can write $V_i(\pi)$ as
\begin{equation}
V_i(\pi) =\sum_{t=0}^\infty \gamma^t \int_{(\calS\times\calA)^\infty} r_i(s_t,a_t)\prod_{u=1}^\infty p(s_u|s_{u-1},a_{u-1})\pi(a_u|s_u) p(s_0)\pi(a_0|s_0)\, ds_0\ldots da_0\ldots.
  \end{equation}
Notice that for every $u> t$ the integrals with respect to $a_u$ and $s_u$ yield one, since they are integrating density functions. Thus, the previous expression reduces to 
\begin{equation}\label{eqn_aux_value_function}
V_i(\pi) =\sum_{t=0}^\infty \gamma^t \int_{(\calS\times\calA)^t} r_i(s_t,a_t)\prod_{u=1}^t p(s_u|s_{u-1},a_{u-1})\pi(a_u|s_u) p(s_0)\pi(a_0|s_0)\, ds_0\ldots ds_t da_0\ldots da_t.
\end{equation}
Notice that the probability density of being at state $s$ and choosing action $a$ under the policy $\pi$ at time $t$ can be written as
\begin{equation}
p_\pi^t(s_t,a_t) = \int_{(\calS\times\calA)^{t-1}} \prod_{u=1}^t p(s_u|s_{u-1},a_{u-1})\pi(a_u|s_u) p(s_0)\pi(a_0|s_0)\, ds_0\ldots ds_{t-1} da_0\ldots da_{t-1}.
\end{equation}
Thus, using again the Dominated Convergence Theorem, one can write compactly \eqref{eqn_aux_value_function} as 
\begin{equation}
V_i(\pi) = \int_{\calS\times\calA} r_i(s,a) \sum_{t=0}^\infty \gamma^t p_\pi^t(s,a)\, ds da.
  \end{equation}
{
By defining the occupation measure $\rho(s,a) = \left(1-\gamma\right)\sum_{t=0}^\infty \gamma^tp_\pi^t(s,a)$ it follows that $(1-\gamma)V_i(\pi) = \int_{\calS\times\calA} r_i(s,a) \rho(s,a)\, ds da$.
Denote by $\calM(\calS,\calA)$ the measures over $\calS\times\calA$ and define the set $\calR$ as the set of all occupation measures induced by the policies $\pi\in\calP(\calS)$ as
\begin{equation}
\calR := \left\{\rho\in\calM(\calS,\calA)\big| \rho(s,a)=\left(1-\gamma\right)\left(\sum_{t=0}^\infty \gamma^tp_\pi(s_t=s,a_t=a)\right) \right\},
  \end{equation}
where 
It follows from \cite[Theorem 3.1]{borkar1988convex} that the set of occupation measures $\calR$ is convex and compact. Hence, we can write the following linear program equivalent to \eqref{P:perturbed}
}

\begin{prob}[${\text{PI}^{\prime\prime}}$]\label{P:linear_perturbed}
	P(\xi) \triangleq\max_{\rho \in \calR}&\frac{1}{1-\gamma}\int_{\calS\times\calA}r_0(s,a)\rho(s,a)\,dsda
	\\
	\subjectto&\frac{1}{1-\gamma}\int_{\calS\times\calA}r_i(s,a)\rho(s,a)\,dsda \geq c_i+\xi_i, \, i=1,\ldots,m. 
\end{prob}
Let~$\rho_1,\rho_2\in\calR$ be the occupation measures associated to~$\pi_1$ and~$\pi_2$. Since,~$\calR$ is convex, there exists a policy~$\pi_\mu\in\calP(\calS)$ such that its corresponding occupation measure is~$\rho_\mu = \mu\rho_1+(1-\mu)\rho_2\in\calR$. Notice that~$\rho_\mu$ satisfies the constraints with slack~$c_i+\mu\xi^1_i+(1-\mu)\xi^2_i$ for $i=1,\ldots, m$ since the integral is linear and~$\rho_1$ and~$\rho_2$ satisfy the constraints with slacks~$c_i+\xi^1_i$ and~$c_i+\xi^2_i$ respectively. Thus, it follows that
\begin{equation}
P(\mu\xi^1+(1-\mu)\xi^2) \geq \frac{1}{1-\gamma}\int_{\calS\times\calA} r_0(s,a)\rho_{\mu}(s,a)\,ds da = \mu V_0(\pi_1) +(1-\mu) V_0(\pi_2),
  \end{equation}
where we have used again the linearity of the integral. Since $\pi_i$ are such that $V_0(\pi_1) = P(\xi^1)$ and $V_0(\pi_2)=P(\xi^2)$, inequality \eqref{E:convexPF} follows. This completes the proof that the perturbation function is concave.
\end{proof}
}
Theorem~\ref{T:strongDuality} establishes a fundamental equivalence between the constrained~\eqref{P:constrainedRL} and the dual problem~\eqref{P:dualRL}~[and therefore also~\eqref{P:regularizedRL}]. Indeed, since~\eqref{P:constrainedRL} has no duality gap, its solution can be obtained by solving~\eqref{P:dualRL}. What is more, the trade-offs expressed by the~$w_i$ in~\eqref{P:regularizedRL} are the same as those expressed by the specifications~$c_i$ in the sense that they trace the same Pareto front. Nevertheless, note that the relationship between~$c_i$ and~$w_i$ is not trivial and that specifying the constrained problem is often considerably simpler. 
Theorem~\ref{T:strongDuality} establishes that this is indeed a valid transformation, since both problems are equivalent. Observe that due to the non-convexity of the objective in RL problems, this result is in fact not immediate.

The theoretical importance of the previous result notwithstanding, it does not yield a procedure to solve~\eqref{P:constrainedRL} since evaluating the dual function involves a maximization problem that is intractable for general classes of distributions. In the next section, we study the effect of using a finite parametrization for the policies and show that the price to pay in terms of duality gap depends on how ``good'' the parametrization is. If we consider, for instance, a neural network---which are universal function approximators~\cite{funahashi1989approximate, cybenko1989approximation, hornik1989multilayer, lu2017expressive, lin2018resnet}---the loss in optimality can be made arbitrarily small. 
%
%
\section{There is (almost) no price to pay by parametrizing the policies}
\label{S:params}
We consider next the problem where the policies are parametrized by a vector $\theta\in\mathbb{R}^p$. This vector could be for instance the coefficients of a neural network or the weights of a linear combination of functions. In this work, we focus our attention however on a widely used class of parametrizations that we term \textit{near-universal}, which are able to model any function in $\calP(\calS)$ to within a stated accuracy. We formalize this concept in the following definition.
\begin{definition}\label{def_universal_param}
  A parametrization $\pi_\theta$ is an $\epsilon$-universal parametrization of functions in $\calP(\calS)$ if, for some $\epsilon >0$, there exists for any $\pi\in\calP(\calS)$ a parameter $\theta \in \mathbb{R}^p$ such that
  \begin{equation}
\max_{s\in\calS}\int_{\calA}\left|\pi(a|s)-\pi_\theta(a|s)\right|\,da\leq \epsilon.
  \end{equation}  
  \end{definition}
The previous definition includes all parametrizations that induce distributions that are close to distributions in $\calP(\calS)$ in total variational norm. Notice that this is a milder requirement than approximation in uniform norm which is a property that has been established to be satisfied by radial basis functions networks
~\cite{park1991universal}, reproducing kernel Hilbert spaces
~\cite{sriperumbudur2010relation} and deep neural networks
~\cite{hornik1989multilayer}. {Notice that the objective function and the constraints in Problem \eqref{P:constrainedRL} involve an infinite horizon and thus, the policy is applied an infinite number of times. Hence, the error introduced by the parametrization could a priori accumulate and induce distributions over trajectories that differ considerably from the distributions induced by policies in $\calP(\calS)$. We claim in the following lemma that this is not the case. 
  %
  %
%
%
%

%
\begin{lemma}\label{lemma_meassure_bound}
  Let $\rho$ and $\rho_\theta$ be occupation measures induced by the policies $\pi\in\calP(\calS)$ and $\pi_{\theta}$ respectively, where $\pi_{\theta}$ is an $\epsilon$- parametrization of $\pi$. Then, it follows that
  \begin{equation}\label{eqn_occupation_bound}
\int_{\calS\times\calA} |\rho(s,a)-\rho_\theta(s,a)| \,dsda \leq \frac{\epsilon}{1-\gamma}.
    \end{equation}
\end{lemma}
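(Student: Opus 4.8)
The plan is to control the total-variation distance between the occupation measures by tracking the discrepancy one time step at a time and then summing against the geometric discount. Writing $\rho = (1-\gamma)\sum_{t=0}^\infty \gamma^t p_\pi^t$ and likewise for $\rho_\theta$, the triangle inequality gives
$$
\int_{\calS\times\calA}|\rho - \rho_\theta|\,ds\,da \le (1-\gamma)\sum_{t=0}^\infty \gamma^t \delta_t,
\qquad
\delta_t \triangleq \int_{\calS\times\calA}|p_\pi^t(s,a) - p_{\pi_\theta}^t(s,a)|\,ds\,da.
$$
It therefore suffices to prove the per-step bound $\delta_t \le (t+1)\epsilon$; substituting it and using the identity $\sum_{t\ge 0}(t+1)\gamma^t = (1-\gamma)^{-2}$ collapses the right-hand side to exactly $\epsilon/(1-\gamma)$, which is the claimed constant.

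To bound $\delta_t$ I would factor the time-$t$ marginal as $p_\pi^t(s,a) = q_\pi^t(s)\,\pi(a|s)$, where $q_\pi^t$ is the state marginal, and add and subtract the mixed term $q_\pi^t(s)\,\pi_\theta(a|s)$. This splits $\delta_t$ into two pieces: (a) $\int_\calS q_\pi^t(s)\left(\int_\calA|\pi(a|s)-\pi_\theta(a|s)|\,da\right)ds$, which is at most $\epsilon$ since the inner integral is $\le \epsilon$ uniformly in $s$ by Definition~\ref{def_universal_param} and $q_\pi^t$ integrates to one; and (b) $\int_\calS|q_\pi^t(s)-q_{\pi_\theta}^t(s)|\left(\int_\calA \pi_\theta(a|s)\,da\right)ds = \eta_t$, the total-variation distance between the two state marginals, because $\pi_\theta(\cdot|s)$ is a probability density. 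This yields $\delta_t \le \epsilon + \eta_t$.

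The key step, and the one that rules out the catastrophic compounding anticipated in the paragraph preceding the lemma, is the contraction $\eta_t \le \delta_{t-1}$. This follows by writing $q_\pi^t(s) = \int p(s|s',a')\,p_\pi^{t-1}(s',a')\,ds'\,da'$, subtracting the analogous expression for $\pi_\theta$, bringing the absolute value inside the integral, and swapping the order of integration (justified by Fubini under the boundedness already used for the Dominated Convergence Theorem earlier). Since $p(\cdot|s',a')$ is a transition density, $\int_\calS p(s|s',a')\,ds = 1$, and the remaining integral is precisely $\delta_{t-1}$; intuitively, a Markov kernel is a stochastic map and hence non-expansive in total variation. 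Combining with the previous paragraph gives the recursion $\delta_t \le \epsilon + \delta_{t-1}$, and since $\delta_0 = \int_\calS p(s_0)\left(\int_\calA|\pi(a|s)-\pi_\theta(a|s)|\,da\right)ds \le \epsilon$, a one-line induction delivers $\delta_t \le (t+1)\epsilon$.

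The hard part is conceptual rather than computational: recognizing that the single-step error enters \emph{additively} (via term (a)) while the propagated error is \emph{non-expansively transported} by the transition kernel (via the contraction $\eta_t \le \delta_{t-1}$), so that the discrepancy grows only linearly in $t$ rather than geometrically. Once this is in place, the linear growth is tamed by the factor $\gamma^t$, and the remaining manipulations—interchanging sums and integrals, and separating the series—are routine given the standing boundedness assumptions.
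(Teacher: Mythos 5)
Your proof is correct and takes essentially the same route as the paper's: the identical two-term split of $\left|p_\pi^t(s,a)-p_{\pi_\theta}^t(s,a)\right|$ into a policy-approximation term (bounded by $\epsilon$ uniformly in $t$ via Definition~\ref{def_universal_param}) and a state-marginal term, together with the same non-expansiveness of the Markov transition kernel in total variation. The only difference is how the recursion is closed---the paper keeps it inside the discounted sum and solves the self-bounding inequality $D \leq \epsilon + \gamma D$, whereas you unroll it to $\delta_t \leq (t+1)\epsilon$ and sum the arithmetic-geometric series---and both yield exactly $\epsilon/(1-\gamma)$.
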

The previous result, although derived as a technical result required to bound the duality gap for parametric problems, has a natural interpretation. The larger $\gamma$ \textemdash the more the operation is concerned about rewards far in the future \textemdash the larger the error in the approximation of the occupation measure.} Having defined the concept of universal approximator, we shift focus to writing the parametric version of the constrained reinforcement learning problem. This is, to find the parameters that solve \eqref{P:constrainedRL}, where now the policies are restricted to the functions induced by the chosen parametrization
\begin{prob}\label{P:parametric}
	P^\star_\theta\triangleq\max_{\theta}& &&V_0(\theta)\triangleq	\E_{s,\pi_\theta} \left[ \sum_{t = 0}^\infty \gamma^t r_0(s_t,\pi_\theta(s_t)) \right]
	\\
	\subjectto& &&V_i(\theta)  \triangleq	\E_{s,\pi_\theta} \left[ \sum_{t = 0}^\infty \gamma^t r_i(s_t,\pi_\theta(s_t)) \right]\geq c_i, \, i=1 \ldots m.
\end{prob}
Notice that the problem \eqref{P:parametric} is similar to the original problem \eqref{P:constrainedRL}, with the only difference that the expectations are now with respect to distributions induced by the parameter vector $\theta$. As done in the previous section, let $\lambda\in\mathbb{R}^m$  and define the dual function associated to $\eqref{P:parametric}$ as
\begin{equation}\label{eqn_parametric_dual_function}
d_\theta(\lambda) \triangleq \min_{\theta\in\mathbb{R}^p}\calL_\theta(\theta,\lambda) \triangleq \min_{\theta\in\mathbb{R}^p} V_0(\theta) + \sum_{i=1}^m \lambda_i\left(V_i(\theta)-c_i\right), 
  \end{equation}
Likewise we define the dual problem as finding the tightest upper bound for \eqref{P:parametric}
\begin{prob}[${\text{DII}}$]\label{P:dual_parametric}
	D_\theta^\star \triangleq\minimize_{\lambda \in \mathbb{R}^m_+}& &&d_\theta(\lambda). 
\end{prob}
As previously stated, the reason for introducing the parametrization is to turn the original functional optimization problem into a tractable problem in which the optimization variable is a finite dimensional vector of parameters. Yet, there is a cost for introducing the aforementioned parametrization: the duality gap is no longer null. The latter means that the solution obtained through the dual problem is sub-optimal. We claim however that this gap is bounded by a function that is linear with the approximation error $\epsilon$, and thus if the parametrization has a good representation power the price to pay is almost zero. This is the subject of the following theorem. 
\begin{theorem}\label{T:parametrization}
  Suppose that~$r_i$ is bounded for all~$i=0,\ldots, m$ by constants~$B_{r_i}>0$ and define~$B_r = \max_{i=1\ldots m}B_{r_i}$. Let $\lambda_\epsilon^\star$ be the solution to the dual problem associated to \eqref{P:perturbed} for perturbation $\xi_i = B_r\epsilon/(1-\gamma)$ for all $i=1,\ldots,m$.
  Then, under the hypothesis of Theorem \ref{T:strongDuality} it follows that 
%
%
%
  \begin{equation}\label{eqn_azdg}
    P^\star\geq D_\theta^\star \geq P^\star - \left(B_{r_0}+\left\|{\lambda^\star_\epsilon}\right\|_1B_r\right)\frac{\epsilon}{1-\gamma},
  \end{equation}
  where~$P^\star$ is the optimal value of~\eqref{P:constrainedRL}, and~$D_{\theta}^\star$ the value of the parametrized dual problem \eqref{P:dual_parametric}.
  \end{theorem}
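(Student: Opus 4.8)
The plan is to sandwich $D_\theta^\star$ between $P^\star$ and $P^\star$ minus the stated error term. The upper inequality $P^\star \ge D_\theta^\star$ is the easy half and follows from the fact that restricting the policy class can only shrink the dual function; the lower inequality is the substance of the result, and it is where the perturbed problem \eqref{P:perturbed} and its optimal multiplier $\lambda_\epsilon^\star$ must enter. I will handle the two directions separately.

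For the upper bound, I would note that the parametrized policies $\{\pi_\theta : \theta \in \mathbb{R}^p\}$ form a subset of $\calP(\calS)$. Hence for every $\lambda \in \mathbb{R}^m_+$ the parametrized dual function $d_\theta(\lambda)$, being the maximum of the Lagrangian over this smaller set, satisfies $d_\theta(\lambda) \le d(\lambda)$. Minimizing over $\lambda \ge 0$ and invoking the zero duality gap of the unparametrized problem established in Theorem~\ref{T:strongDuality} ($D^\star = P^\star$) gives $D_\theta^\star \le D^\star = P^\star$.

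For the lower bound I would assemble three ingredients. First, a continuity estimate for the value functions: by Lemma~\ref{lemma_meassure_bound} the occupation measures obey $\int_{\calS\times\calA}|\rho - \rho_\theta| \le \epsilon/(1-\gamma)$, and since each reward is bounded by $B_{r_i}$ this transfers to $|V_i(\pi) - V_i(\pi_\theta)| \le B_{r_i}\epsilon/(1-\gamma)$ for every $i$. Second, feasibility-by-perturbation: let $\pi_\xi$ attain the perturbed problem \eqref{P:perturbed} with $\xi_i = B_r\epsilon/(1-\gamma)$ (existence follows from the compact, convex occupation-measure reformulation used in the proof of Theorem~\ref{T:strongDuality}), and let $\pi_\theta$ be its $\epsilon$-parametrization. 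Then $V_i(\pi_\theta) \ge V_i(\pi_\xi) - B_{r_i}\epsilon/(1-\gamma) \ge c_i + \xi_i - B_{r_i}\epsilon/(1-\gamma) \ge c_i$, using $B_r \ge B_{r_i}$, so $\pi_\theta$ is feasible for \eqref{P:parametric}; consequently $P_\theta^\star \ge V_0(\pi_\theta) \ge V_0(\pi_\xi) - B_{r_0}\epsilon/(1-\gamma) = P(\xi) - B_{r_0}\epsilon/(1-\gamma)$. Third, weak duality for the parametrized problem gives $D_\theta^\star \ge P_\theta^\star$.

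It remains to lower bound $P(\xi)$ in terms of $P^\star$, and this is the step that forces the appearance of $\lambda_\epsilon^\star$ and is the main obstacle. Since the perturbation enters the constraints only linearly, the perturbed dual function equals $d(\lambda) - \lambda^\top \xi$; as the perturbed problem still satisfies Slater's condition for small $\epsilon$ and retains a concave perturbation function, Theorem~\ref{T:strongDuality} applies to it and yields $P(\xi) = \min_{\lambda \ge 0}\left[d(\lambda) - \lambda^\top\xi\right] = d(\lambda_\epsilon^\star) - (\lambda_\epsilon^\star)^\top\xi$. Finally $d(\lambda_\epsilon^\star) \ge \min_{\lambda} d(\lambda) = D^\star = P^\star$, so $P(\xi) \ge P^\star - (\lambda_\epsilon^\star)^\top\xi = P^\star - \|\lambda_\epsilon^\star\|_1 B_r\epsilon/(1-\gamma)$, and chaining the three inequalities delivers the claimed bound. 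The delicate point is precisely this routing: a naive comparison $d_\theta(\lambda) \ge d(\lambda) - O(\|\lambda\|_1\epsilon)$ fails to give a uniform-in-$\lambda$ lower bound because the error grows with $\|\lambda\|_1$, so the minimization over $\lambda$ could drive it to $-\infty$; passing instead through the perturbed primal and its \emph{specific} optimal multiplier $\lambda_\epsilon^\star$ keeps the multiplier norm finite and converts the approximation slack into the fixed perturbation $\xi$, which is what makes the bound nontrivial.
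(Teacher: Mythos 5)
Your upper bound is the paper's argument verbatim: the parametrized class is a subset of $\calP(\calS)$, so $d_\theta(\lambda)\le d(\lambda)$ pointwise, and Theorem~\ref{T:strongDuality} turns $D^\star$ into $P^\star$. The lower bound is where you genuinely diverge. The paper never leaves the dual side: from Lemma~\ref{lemma_meassure_bound} it gets $d_\theta(\lambda)\ge d(\lambda)-\bigl(B_{r_0}+\|\lambda\|_1B_r\bigr)\frac{\epsilon}{1-\gamma}$ for every $\lambda\in\mathbb{R}^m_+$, observes that $d_\epsilon(\lambda)\triangleq d(\lambda)-\|\lambda\|_1B_r\frac{\epsilon}{1-\gamma}$ is exactly the dual function of \eqref{P:perturbed} with $\xi_i=B_r\epsilon/(1-\gamma)$, evaluates the inequality at $\lambda_\theta^\star$, uses $d_\epsilon(\lambda_\theta^\star)\ge d_\epsilon(\lambda_\epsilon^\star)$, and finally bounds $d_\epsilon(\lambda_\epsilon^\star)\ge P^\star-\|\lambda_\epsilon^\star\|_1B_r\frac{\epsilon}{1-\gamma}$ by plugging the feasible point $\pi^\star$ of \eqref{P:constrainedRL} into the perturbed Lagrangian; only weak-duality evaluations appear. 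You instead work on the primal side: approximate the maximizer $\pi_\xi$ of \eqref{P:perturbed}, verify the approximation stays feasible for \eqref{P:parametric} (a correct and genuinely nice structural step that the paper's proof hides), deduce $P_\theta^\star\ge P(\xi)-B_{r_0}\frac{\epsilon}{1-\gamma}$, and then chain weak duality $D_\theta^\star\ge P_\theta^\star$ with \emph{strong} duality for the perturbed problem, $P(\xi)=d(\lambda_\epsilon^\star)-(\lambda_\epsilon^\star)^\top\xi\ge P^\star-\|\lambda_\epsilon^\star\|_1B_r\frac{\epsilon}{1-\gamma}$. That last invocation is the one real cost of your route: weak duality goes the wrong way for a lower bound on $P(\xi)$, so you truly need $P(\xi)=D(\xi)$, which requires Slater's condition for the shifted constraints $V_i(\pi)>c_i+B_r\epsilon/(1-\gamma)$ --- your ``for small $\epsilon$'' proviso, a hypothesis the theorem does not grant (it assumes only Slater for \eqref{P:constrainedRL} plus existence of $\lambda_\epsilon^\star$, and dual attainment does not imply Slater at $\xi$). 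You also need attainment of $P(\xi)$, though your appeal to the compact convex occupation-measure set is a reasonable fix, on par with attainment assumptions the paper itself makes implicitly.

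Separately, your closing ``delicate point'' misreads the landscape: the comparison $d_\theta(\lambda)\ge d(\lambda)-O(\|\lambda\|_1\epsilon)$ that you dismiss as unworkable is precisely the paper's proof, and it does not fail. No uniform-in-$\lambda$ lower bound is required: the $\|\lambda\|_1$-dependent term is not unstructured error but is absorbed into $d_\epsilon(\lambda)$, the dual function of the perturbed problem, which is bounded below by $P^\star-\|\lambda_\epsilon^\star\|_1B_r\frac{\epsilon}{1-\gamma}$ whenever $\lambda_\epsilon^\star$ exists; evaluating the comparison at the single point $\lambda_\theta^\star$ then closes the argument. In effect both proofs tame the growth in $\|\lambda\|_1$ through the same object --- the perturbed problem --- you through its primal value (hence the need for strong duality at $\xi$), the paper through its dual value (hence only weak duality, and validity for any $\epsilon$ for which $\lambda_\epsilon^\star$ is finite).
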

The implication of the previous result is that there is almost no price to pay by introducing a parametrization. By solving the dual problem \eqref{P:dual_parametric} the sub-optimality achieved is of order $\epsilon$, i.e., the error on the representation of the policies. Notice that this error could be made arbitrarily small by increasing the representation ability of the parametrization, by for instance increasing the dimension of the vector of parameters $\theta$. The latter means that if we can compute the dual function it is possible to solve \eqref{P:constrainedRL} approximately. Moreover, working on the dual domain provides two computational advantages; on one hand, the dimension of the problem is the number of constraints in \eqref{P:constrainedRL}. In addition, the dual function is always convex, hence gradient descent on the dual domain solves the problem of interest. In the next section we propose an algorithm to solve \eqref{P:constrainedRL} approximately based on the previous discussion.

Before doing so notice that we have not assumed anything about the feasibility of problem \eqref{P:parametric}. Notice that if the problem is infeasible then we have that $D_\theta^\star = -\infty$ and thus the upper bound on \eqref{eqn_azdg} holds trivially. On the other hand if the problem is infeasible it also means that there is no policy $\pi\in\calP(\calS)$ that satisfies the constraints of \eqref{P:constrainedRL} with slack $B_r\epsilon/(1-\gamma)$ since $\theta$ is an $\epsilon$-universal approximation of $\calP(\calS)$. Hence the perturbed problem is infeasible which yields a dual multiplier $\lambda_\epsilon^\star$ that has infinite norm. Thus the right hand side of \eqref{eqn_azdg} holds as well. In that sense, as long as the parameterization introduced keeps the problem feasible the price to pay for parameterizing is almost zero. 
\section{Solving Constrained Reinforcement Learning Problems }
\label{S:pd}
As previously stated, the dual function is always a convex function since it is the point-wise maximum of linear functions. Thus the dual problem \eqref{P:dual_parametric} can be efficiently solved using (sub)gradient descent, with the caveat that because we require the dual iterates to remain in the positive orthant, we include a projection onto this space after taking the gradient step
\begin{equation}\label{eqn_ideal_gradient_descent}
\lambda_{k+1} =\left[\lambda_k - \eta \partial d_\theta(\lambda_k)\right]_+,
  \end{equation}
where $\eta>0$ is the step-size of the algorithm, $\left[\cdot\right]_+$ denotes the projection onto $\mathbb{R}^m_+$ and $\partial d_\theta(\lambda)$ denotes \textemdash with a slight abuse of notation \textemdash a vector in the subgradient of $d_\theta(\lambda)$. The latter can be computed by virtue of Dankin's Theorem (see e.g. \cite[Chapter 3]{bertsekas2015convex}) by evaluating the constraints in the original problem \eqref{P:parametric} at the primal maximizer of the Lagrangian. Thus, the main theoretical difficulty in this computation lies on finding said maximizer since the Lagrangian is non-convex with respect to $\theta$.  
However, maximizing the Lagrangian with respect to $\theta$ corresponds to learning a policy that uses as reward the following linear combination of rewards
\begin{equation}
r_\lambda(s,a) = r_0(s,a)+ \sum_{i=1}^m \lambda_i r_i(s,a). 
\end{equation}
Indeed, using the linearity of the expectation, the cumulative discounted cost for the reward $r_\lambda(s,a)$ yields
\begin{equation}
\mathbb{E}_{s,\pi}\left[\sum_{t=0}^\infty\gamma^t r_{\lambda}(s_t,a_t) \right]= \mathbb{E}_{s,\pi_\theta}\left[\sum_{t=0}^\infty\gamma^t r_{0}(s_t,a_t) \right] +\sum_{i=1}^m\lambda_i\mathbb{E}_{s,\pi_\theta}\left[\sum_{t=0}^\infty\gamma^t r_{i}(s_t,a_t) \right] =\calL(\theta,\lambda).
  \end{equation}
And therefore reinforcement learning algorithms such as policy gradient \cite{sutton2000policy} or actor-critic methods \cite{konda2000actor} can be used to find the parameters $\theta$ such that they maximize the Lagrangian. The good performance of these algorithms is rooted in the fact that they are able to maximize the expected cumulative reward or at least to achieve a value that is close to the maximum. The next assumption formalizes this idea. 
%
%
\begin{assumption}\label{assumption_neural_network}
Let $\pi_\theta$ be a parametrization of functions in $\calP(\calS)$ and let $\calL_\theta(\theta,\lambda)$ with $\lambda\in\mathbb{R}^m_+$ be the Lagrangian associated to \eqref{P:parametric}. Denote by $\theta^\star(\lambda),\theta^\dagger(\lambda)\in\mathbb{R}^P$ the maximum of $\calL(\theta,\lambda)$ and a local maximum respectively achieved by a generic reinforcement learning algorithm. Then, there exists $\delta>0$ such that for all $\lambda\in\mathbb{R}^m_+$ it holds that $\calL_\theta(\theta^\star(\lambda),\lambda) \leq \calL_\theta(\theta^\dagger(\lambda),\lambda)+\delta$.
\end{assumption}
Notice that the previous assumption only means that we are able to solve the regularized unconstrained problem approximately. This means that the parameter at time $k+1$ is
\begin{equation}\label{eqn_primal_max}
\theta_{k+1} \approx \argmax_{\theta\in\mathbb{R}^p} \calL(\lambda_k,\theta).
\end{equation}
Then, the dual variable is updated following the gradient descent scheme suggested in~\eqref{eqn_ideal_gradient_descent}, where we replace the subgradient of the dual function by the constraint of the primal problem~\eqref{P:parametric}. Defining~$\hat{\partial} d_k \triangleq V(\theta_{k+1}) - s$, the update yields
\begin{equation}\label{eqn_gradient_descent}
\lambda_{k+1} =\left[\lambda_k - \eta \hat{\partial} d_k\right]_+ = \left[\lambda_k - \eta \left(V(\theta_{k+1})-s\right)\right]_+.
  \end{equation}
%
The algorithm given by~\eqref{eqn_primal_max}--\eqref{eqn_gradient_descent} is summarized under Algorithm~\ref{alg_dual_descent}.
%
\begin{algorithm}[t]
  \caption{dualDescent}
  \label{alg_dual_descent} 
\begin{algorithmic}[1]
 \renewcommand{\algorithmicrequire}{\textbf{Input:}}
 \renewcommand{\algorithmicensure}{\textbf{Output:}}
 \Require $\eta$
 \State \textit{Initialize}: $\theta_0 = 0$, $\lambda_0 = 0$
  \For {$k=0,1\ldots$}
  \State Compute an approximation of $\theta_{k+1} \approx \argmax \calL_{\theta}(\theta,\lambda_k)$ with a RL algorithm
  \State Compute the dual ascent step $\lambda_{k+1} = \left[\lambda_k-\eta \left(V(\theta_{k+1})-s\right)\right]_+$.
  \EndFor
 \end{algorithmic}
 \end{algorithm}
The previous algorithm relies on the fact that the $\hat{\partial}d_k$ does not differ much from $\partial d_\theta(\lambda_k)$.  We claim in the following proposition that this is the case. In particular, we establish that the constraint evaluation does not differ from the subgradient in more than $\delta$, the error on the primal maximization defined in Assumption \ref{assumption_neural_network}.

\begin{proposition}\label{prop_subgradient}
  Under Assumption \ref{assumption_neural_network}, the constraint in \eqref{P:parametric} evaluated at a local maximizer of Lagrangian $\theta^\dagger(\lambda)$ approximate the subgradient of the dual function \eqref{eqn_parametric_dual_function}. In particular it follows that
  \begin{equation}
    d_\theta(\lambda) - d_\theta(\lambda_\theta^\star) \leq  \left(\lambda-\lambda_\theta^\star\right)^\top\left( V(\theta^\dagger(\lambda))-s\right)+\delta.
  \end{equation}
  \end{proposition}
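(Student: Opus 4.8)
The plan is to exploit the fact that the parametrized dual function is a pointwise supremum of functions that are \emph{affine} in $\lambda$, which yields a subgradient-type inequality essentially for free, and then to absorb the suboptimality of the local maximizer into the additive constant $\delta$ supplied by Assumption~\ref{assumption_neural_network}. Throughout I read the dual function as $d_\theta(\lambda) = \max_{\theta} \calL_\theta(\theta,\lambda)$ (the displayed definition in~\eqref{eqn_parametric_dual_function} appears to carry a typographical $\min$; the maximization form is the one for which the dual upper-bounds~\eqref{P:parametric} and for which Assumption~\ref{assumption_neural_network} makes sense), and I write $s = (c_1,\dots,c_m)^\top$ so that $V(\theta)-s$ is exactly the vector of constraint slacks $\left(V_i(\theta)-c_i\right)_{i=1}^m$.

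First I would bound $d_\theta(\lambda)$ from above. Since $\theta^\star(\lambda)$ attains the maximum of $\calL_\theta(\cdot,\lambda)$ we have $d_\theta(\lambda) = \calL_\theta(\theta^\star(\lambda),\lambda)$, and Assumption~\ref{assumption_neural_network} then gives $d_\theta(\lambda) \leq \calL_\theta(\theta^\dagger(\lambda),\lambda) + \delta$. Next I would bound $d_\theta(\lambda_\theta^\star)$ from below: because $d_\theta(\lambda_\theta^\star)$ is the maximum over \emph{all} $\theta$ of $\calL_\theta(\cdot,\lambda_\theta^\star)$, evaluating at the single point $\theta^\dagger(\lambda)$ can only decrease it, so $d_\theta(\lambda_\theta^\star) \geq \calL_\theta(\theta^\dagger(\lambda),\lambda_\theta^\star)$.

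Subtracting these two bounds gives
\[
d_\theta(\lambda) - d_\theta(\lambda_\theta^\star) \leq \calL_\theta(\theta^\dagger(\lambda),\lambda) - \calL_\theta(\theta^\dagger(\lambda),\lambda_\theta^\star) + \delta .
\]
The key observation is that both Lagrangians on the right are evaluated at the \emph{same} parameter $\theta^\dagger(\lambda)$, so upon expanding $\calL_\theta(\theta,\lambda) = V_0(\theta) + \sum_i \lambda_i\left(V_i(\theta)-c_i\right)$ the $V_0(\theta^\dagger(\lambda))$ terms cancel and only the part linear in $\lambda$ survives. Collecting these terms identifies the difference as
\[
\sum_{i=1}^m \left(\lambda_i - (\lambda_\theta^\star)_i\right)\left(V_i(\theta^\dagger(\lambda))-c_i\right) = \left(\lambda-\lambda_\theta^\star\right)^\top\left(V(\theta^\dagger(\lambda))-s\right),
\]
which is precisely the claimed bound.

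There is no genuinely hard step here; the statement reduces to a short chain of inequalities once the affine-in-$\lambda$ structure is made explicit. The only points requiring care are (i) using the maximization form of $d_\theta$ consistently, and (ii) keeping strict track of which evaluations use the global maximizer $\theta^\star$ and which use the local maximizer $\theta^\dagger$. The entire content of the proposition is that swapping $\theta^\star$ for the achievable $\theta^\dagger$ costs at most $\delta$ in the upper bound while still producing a legitimate approximate subgradient $V(\theta^\dagger(\lambda))-s$ of the convex dual, which is exactly the quantity driving the dual update~\eqref{eqn_gradient_descent}.
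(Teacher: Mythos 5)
Your proof is correct and follows essentially the same route as the paper's: both write $d_\theta(\lambda)=\calL_\theta(\theta^\star(\lambda),\lambda)$, lower-bound $d_\theta(\lambda_\theta^\star)$ by evaluating the Lagrangian at the single point $\theta^\dagger(\lambda)$, invoke Assumption~\ref{assumption_neural_network} to trade $\theta^\star(\lambda)$ for $\theta^\dagger(\lambda)$ at cost $\delta$, and use the affine-in-$\lambda$ structure at a fixed parameter so the difference collapses to $\left(\lambda-\lambda_\theta^\star\right)^\top\left(V(\theta^\dagger(\lambda))-s\right)$. The only differences are cosmetic (you take the two bounds separately and then subtract, while the paper bounds the difference in one chain), and your observations that the $\min$ in \eqref{eqn_parametric_dual_function} is a typo for $\max$ and that $s$ denotes the vector of specifications $c_i$ are both accurate readings of the paper.
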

%
The previous proposition is key in establishing convergence of the algorithm proposed since allows us to claim that the dual updated is an approximation of a dual descent step. We formalize this result next and we establish a maximum number of dual steps required to achieve a desired accuracy.
\begin{theorem}\label{T:convergence}
  Let $\pi_\theta$ be an $\epsilon$ universal parametrization of $\calP(\calS)$ according to Definition \ref{def_universal_param}, $B_r = \max_{i=1\ldots m}B_{r_i}$ with $B_{r_i}>0$ bounds on the rewards $r_i$ and~$\gamma\in(0,1)$ be the discount factor. Then, if Slater's conditions hold for~\eqref{P:parametric}, under Assumption \ref{assumption_neural_network} and for any $\varepsilon>0$, the sequence of updates of Algorithm \ref{alg_dual_descent} with step size $\eta$ converges in $K>0$ steps, with 
 \begin{equation}
          K \leq \frac{\left\|\lambda_0-\lambda_\theta^\star\right\|^2}{2\eta \varepsilon},
 \end{equation} 
 to a neighborhood of $P^\star$ --the solution of \eqref{P:constrainedRL}-- satisfying
  \begin{equation}\label{eqn_convergence_neighborhood}
P^\star-\left(B_{r_0}+\left\|\lambda^\star_\epsilon\right\|_1B_r\right)\frac{\epsilon}{1-\gamma}\leq     d_{\theta}(\lambda_K)\leq  P^\star+\eta\frac{B}{2}+\delta+\varepsilon.
  \end{equation}
where~$B =\sum_{i=1}^m\left( B_{r_i}/(1-\gamma)-c_i\right)^2$ and~$\lambda^\star$ is the solution of \eqref{P:dualRL}.
\end{theorem}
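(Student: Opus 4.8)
The plan is to prove the two inequalities in~\eqref{eqn_convergence_neighborhood} separately, since they originate from entirely different arguments. The left inequality is an immediate consequence of Theorem~\ref{T:parametrization}: because $\lambda_K \in \mathbb{R}^m_+$ and $d_\theta$ is a pointwise maximum of the parametrized Lagrangian, we always have $d_\theta(\lambda_K) \geq \min_{\lambda \in \mathbb{R}^m_+} d_\theta(\lambda) = D_\theta^\star$, and Theorem~\ref{T:parametrization} bounds $D_\theta^\star$ from below by $P^\star - (B_{r_0} + \|\lambda_\epsilon^\star\|_1 B_r)\epsilon/(1-\gamma)$. No iteration count is needed for this half; all the work lies in the right inequality, which is the convergence guarantee for the projected subgradient recursion~\eqref{eqn_gradient_descent}.

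For the upper bound I would run the standard analysis of projected subgradient descent on the convex dual $d_\theta$, carrying the approximation error $\delta$ throughout. First I would use that the Euclidean projection $[\cdot]_+$ onto $\mathbb{R}^m_+$ is nonexpansive and that the dual optimum $\lambda_\theta^\star$ lies in $\mathbb{R}^m_+$, so that $\|\lambda_{k+1} - \lambda_\theta^\star\|^2 \leq \|\lambda_k - \eta\hat{\partial} d_k - \lambda_\theta^\star\|^2$. Expanding the square produces the cross term $-2\eta\,\hat{\partial} d_k^\top(\lambda_k - \lambda_\theta^\star)$ together with the quadratic term $\eta^2\|\hat{\partial} d_k\|^2$. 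The crucial input is Proposition~\ref{prop_subgradient}, which lower bounds the cross term through $\hat{\partial} d_k^\top(\lambda_k - \lambda_\theta^\star) \geq d_\theta(\lambda_k) - D_\theta^\star - \delta$, so that the inexact maximization of the Lagrangian over $\theta$ (Assumption~\ref{assumption_neural_network}) enters only as an additive $\delta$. For the quadratic term I would bound $\|\hat{\partial} d_k\|^2 = \sum_{i=1}^m (V_i(\theta_{k+1}) - c_i)^2 \leq B$ using $|V_i(\theta)| \leq B_{r_i}/(1-\gamma)$, which follows from the reward bounds and the geometric series in $\gamma$.

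Combining these gives the one-step inequality $\|\lambda_{k+1} - \lambda_\theta^\star\|^2 \leq \|\lambda_k - \lambda_\theta^\star\|^2 - 2\eta(d_\theta(\lambda_k) - D_\theta^\star) + 2\eta\delta + \eta^2 B$. Rather than averaging, I would use a \emph{decrease-until-convergence} argument, which directly yields the stated iteration count: let $K$ be the first index at which $d_\theta(\lambda_K) - D_\theta^\star \leq \varepsilon + \delta + \eta B/2$. For every $k < K$ the gap exceeds this threshold, and substituting into the one-step inequality collapses its right-hand side to $\|\lambda_k - \lambda_\theta^\star\|^2 - 2\eta\varepsilon$; that is, the squared distance to $\lambda_\theta^\star$ decreases by at least $2\eta\varepsilon$ at each pre-convergence step. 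Since this distance is nonnegative and starts at $\|\lambda_0 - \lambda_\theta^\star\|^2$, at most $\|\lambda_0 - \lambda_\theta^\star\|^2/(2\eta\varepsilon)$ such steps can occur, giving $K \leq \|\lambda_0 - \lambda_\theta^\star\|^2/(2\eta\varepsilon)$. Finally, at this $K$ we have $d_\theta(\lambda_K) \leq D_\theta^\star + \varepsilon + \delta + \eta B/2$, and invoking $D_\theta^\star \leq P^\star$ (the left inequality of Theorem~\ref{T:parametrization}) produces the claimed right inequality.

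The main obstacle I anticipate is not any single estimate but threading the two approximation errors cleanly through the argument and being precise about what convergence means. The parametrization error $\epsilon$ enters only the lower bound, via Theorem~\ref{T:parametrization}, whereas the optimization error $\delta$ from Assumption~\ref{assumption_neural_network} enters only the upper bound, via Proposition~\ref{prop_subgradient}; keeping them separate is exactly what makes the final two-sided bound meaningful. I would also have to confirm that the subgradient-norm bound $B$ holds uniformly over $k$ so the per-step decrease is valid at every pre-convergence iterate, and to state the convergence claim as reaching the neighborhood \emph{for the first time} at step $K$, rather than as a guarantee on the literal last iterate.
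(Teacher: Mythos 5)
Your proposal is correct and follows essentially the same route as the paper's own proof: nonexpansiveness of the projection, Proposition~\ref{prop_subgradient} to control the cross term with an additive $\delta$, the bound $B$ on the squared constraint norm, and a first-hitting-time argument (your threshold condition is exactly the paper's condition $\alpha_j > -2\varepsilon$) combined with Theorem~\ref{T:parametrization} for both sides of~\eqref{eqn_convergence_neighborhood}. The only differences are cosmetic, such as stating the per-step decrease directly rather than via the paper's summed recursion.
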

The previous result establishes a bound on the number of dual iterations required to converge to a neighborhood of the optimal solution. This bound is linear with the inverse of the desired accuracy $\varepsilon$. Notice that the size of the neighborhood to which the dual descent algorithm converges depends on the representation ability of the parametrization chosen, and the goodness of the solution of the maximization of the Lagrangian. 
%
%
Since the cost of running policy gradient or actor-critic algorithms until convergence before updating the dual variable might result in an algorithm that is computationally prohibitive, an alternative that is common in the context of optimization is to update both variables in parallel \cite{arrow_hurwicz}. This idea can be applied in the context of reinforcement learning as well, where a policy gradient \textemdash or 
{actor critic as in  \cite{bhatnagar2012online, tessler2018reward} }\textemdash update is followed by an update of the multipliers along the direction of the constraint violation. In these algorithms the update on the policy is on a faster scale than the update of the multipliers, and therefore they operate from a theoretical point of view as \eqref{alg_dual_descent}. 
{In particular, the proofs in \cite{bhatnagar2012online, tessler2018reward} rely on the fact that this different time-scale is such that allows to consider the multiplier as constant.  }
\section{Numerical Example}

In this section, we include a numerical example in order to showcase the consequences of our theoretical results. As an illustrative example, we consider a gridworld navigation scenario. This scenario, illustrated in Figure \ref{fig:map}, consists of an agent attempting to navigate from a starting position to a goal. To do so, the agent must cross from the left side of the world to the right side using either one of two bridges, one of which is deemed ``unsafe''. The agents uses a softmax policy with four possible actions (moving \texttt{up}, \texttt{down}, \texttt{left}, and \texttt{right}) over a table-lookup of states and actions. The agent receives a reward $r(s,a)=10$ for reaching the goal and a reward of $r(s,a)=-1$ for each step it wanders outside of goal. The scenario is designed  such that the shortest path requires crossing the unsafe path (red bridge), while the safe path (blue bridge) requires a longer detour. Using our formulation, we constrain the agent to not cross the unsafe bridge with $99\%$ probability.

\begin{figure}[t]
\begin{minipage}{.25\textwidth}
	\centering
    \includegraphics[width=\columnwidth]{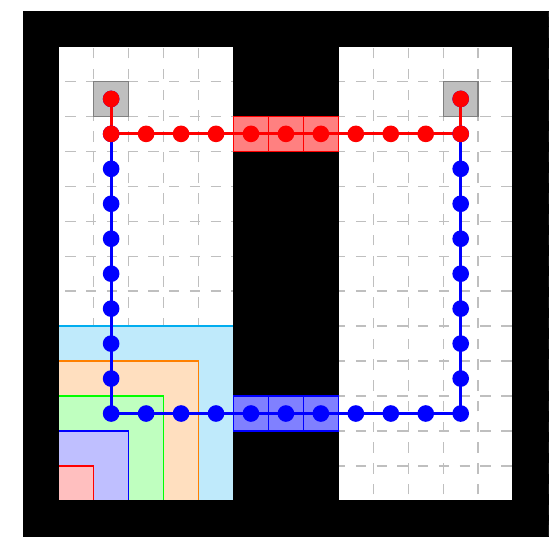}
  	\captionof{figure}{Safe (blue) and unsafe (red) optimal path. Parametrization coarseness is on the bottom left.}
  	\label{fig:map}
\end{minipage}
\begin{minipage}{.36\textwidth}
	\centering
	\centering
    \includegraphics[width=\columnwidth]{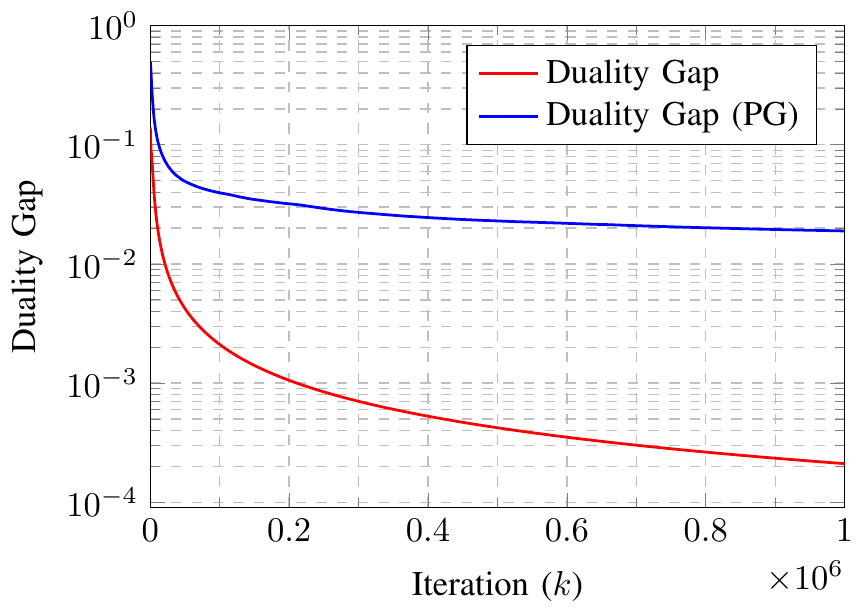}
  	\captionof{figure}{Duality gap of the policies.}
  	\label{fig:gap}
\end{minipage}
\begin{minipage}{.36\textwidth}
	\centering
	\centering
    \includegraphics[width=0.8\columnwidth]{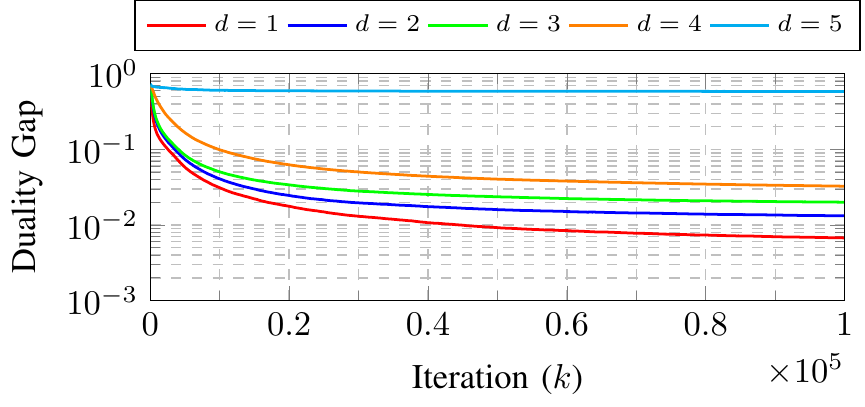} \\
    \includegraphics[width=0.8\columnwidth]{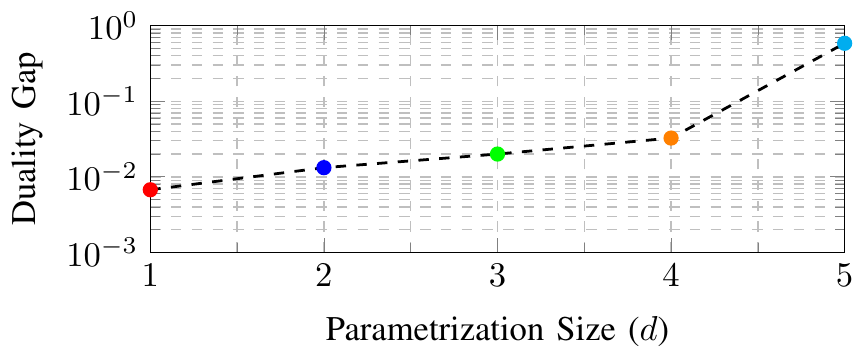}
  	\captionof{figure}{Effect of parametrization coarseness.}
  	\label{fig:gapParam}
\end{minipage}
\vspace{-0.35cm}
\end{figure}

We train the agent via Algorithm \ref{alg_dual_descent}, agent and plot in Fig. \ref{fig:gap}, the resulting normalized duality gap. We consider two cases, an inexact primal maximization via policy gradient and, an exact primal maximization. In order to obtain the global primal minimizer, for a given value of the dual variables $\lambda$, the optimal primal minimizer can be easily found via Dijkstra's algorithm. We show that by solving Step 4 of Algorithm \ref{alg_dual_descent} exactly the duality gap effectively vanishes (red curve). We also showcase a curve in which Step 4 is replaced by a single policy gradient step (blue curve). Since the minimization in Step 4 is done approximately, the duality gap decreases at a slower rate and will only converge to a neighborhood of zero (as per Theorem \ref{T:convergence}). In any of the two cases, ultimately, the agent learns to navigate from start to goal by crossing the safe bridge (blue path in Fig. \ref{fig:map}).

Now, we turn our attention to the effect of the parametrization size. We consider parametrization of different coarseness via state aggregation, as shown in Fig. \ref{fig:map}. This will correspond, as per Definition \ref{def_universal_param}, in parametrizations with lager values of $\epsilon$, i.e., looser approximators. Figure \ref{fig:gapParam} displays the effect of using coarser parametrizations, as the parametrization becomes coarser, the duality gap increases (as per Theorem \ref{T:parametrization}). Specially, for very coarse parametrizations (such as the cyan case), the agent cannot learn a successful policy due to the poor covering properties of its parametrization and resultantly such problem will have a large duality gap.

\section{Discussion}
Throughout this work we have developed a duality theory for constrained reinforcement learning problems. In particular we have established that for policies belonging to a general class of distributions, the duality gap of this problems is null and therefore by solving the problem on the dual domain \textemdash which always yields a finite dimensional convex problem \textemdash yields the same result as solving the original problem directly. Moreover, it establishes the equivalence between the constrained problem and the regularized problem \textemdash or manual selection of multipliers \textemdash in the sense that both problems track the same Pareto optimal front. 

These theoretical implications however do not imply that it is always possible to solve the problem. To be able to solve the dual problem, one is required to evaluate the dual function, which might result intractable in several problems, for instance in cases where arbitrary policies are considered. To overcome this limitation, we have shown that for sufficiently rich parametrizations the zero duality gap result holds approximately. However, for the most part, the parametrizations considered in the literature are not necessarily universal approximators of distributions since in general the output of the neural network reduces to the mean \textemdash and in some cases the variance \textemdash of a distribution. 

{Regardless of these limitations, the primal dual algorithm considered here and those proposed in \cite{bhatnagar2012online, tessler2018reward} provide a manner to solve constrained policy optimization problems without the need to perform an exhaustive search over the weights that we assign to each reward function}, as it is the case in \cite{peng2018deepmimic,di2012policy,tamar2013variance}. Likewise, the need of imposing constraints might arise directly from the algorithm design, this is for instance the case in Trust Region Policy Optimization~\cite{schulman2015trust}, where a constraint on the divergence of the policy is included. Although our theorems do not guarantee that the zero duality gap result holds under these constraints, since they reduce to a projection onto a convex set it would not be surprising that it could be adapted.

\newpage

\bibliographystyle{IEEEbib}
\bibliography{IEEEabrv,gsp,sp,math,control,stat,ml}

\begin{thebibliography}{10}

\bibitem{sutton2018reinforcement}
Richard~S Sutton and Andrew~G Barto,
\newblock {\em Reinforcement learning: An introduction},
\newblock MIT press, 2018.

\bibitem{borkar2005actor}
Vivek~S Borkar,
\newblock ``An actor-critic algorithm for constrained markov decision
  processes,''
\newblock {\em Systems \& control letters}, vol. 54, no. 3, pp. 207--213, 2005.

\bibitem{achiam2017constrained}
Joshua Achiam, David Held, Aviv Tamar, and Pieter Abbeel,
\newblock ``Constrained policy optimization,''
\newblock in {\em Proceedings of the 34th International Conference on Machine
  Learning-Volume 70}. JMLR. org, 2017, pp. 22--31.

\bibitem{peng2018deepmimic}
Xue~Bin Peng, Pieter Abbeel, Sergey Levine, and Michiel van~de Panne,
\newblock ``Deepmimic: Example-guided deep reinforcement learning of
  physics-based character skills,''
\newblock {\em ACM Transactions on Graphics (TOG)}, vol. 37, no. 4, pp. 143,
  2018.

\bibitem{mannor2004geometric}
Shie Mannor and Nahum Shimkin,
\newblock ``A geometric approach to multi-criterion reinforcement learning,''
\newblock {\em Journal of machine learning research}, vol. 5, no. Apr, pp.
  325--360, 2004.

\bibitem{van2014multi}
Kristof Van~Moffaert and Ann Now{\'e},
\newblock ``Multi-objective reinforcement learning using sets of {P}areto
  dominating policies,''
\newblock {\em The Journal of Machine Learning Research}, vol. 15, no. 1, pp.
  3483--3512, 2014.

\bibitem{tessler2018reward}
Chen Tessler, Daniel~J Mankowitz, and Shie Mannor,
\newblock ``Reward constrained policy optimization,''
\newblock {\em arXiv preprint arXiv:1805.11074}, 2018.

\bibitem{leike2017ai}
Jan Leike, Miljan Martic, Victoria Krakovna, Pedro~A Ortega, Tom Everitt,
  Andrew Lefrancq, Laurent Orseau, and Shane Legg,
\newblock ``Ai safety gridworlds,''
\newblock {\em arXiv preprint arXiv:1711.09883}, 2017.

\bibitem{mania2018simple}
Horia Mania, Aurelia Guy, and Benjamin Recht,
\newblock ``Simple random search provides a competitive approach to
  reinforcement learning,''
\newblock {\em arXiv preprint arXiv:1803.07055}, 2018.

\bibitem{schaul2019ray}
Tom Schaul, Diana Borsa, Joseph Modayil, and Razvan Pascanu,
\newblock ``Ray interference: a source of plateaus in deep reinforcement
  learning,''
\newblock {\em arXiv preprint arXiv:1904.11455}, 2019.

\bibitem{bhatnagar2012online}
Shalabh Bhatnagar and K~Lakshmanan,
\newblock ``An online actor--critic algorithm with function approximation for
  constrained markov decision processes,''
\newblock {\em Journal of Optimization Theory and Applications}, vol. 153, no.
  3, pp. 688--708, 2012.

\bibitem{hornik1989multilayer}
Kurt Hornik, Maxwell Stinchcombe, and Halbert White,
\newblock ``Multilayer feedforward networks are universal approximators,''
\newblock {\em Neural networks}, vol. 2, no. 5, pp. 359--366, 1989.

\bibitem{altman1999constrained}
Eitan Altman,
\newblock {\em Constrained Markov decision processes}, vol.~7,
\newblock CRC Press, 1999.

\bibitem{koutsopoulos2011control}
Iordanis Koutsopoulos and Leandros Tassiulas,
\newblock ``Control and optimization meet the smart power grid: Scheduling of
  power demands for optimal energy management,''
\newblock in {\em Proceedings of the 2nd International Conference on
  Energy-efficient Computing and Networking}. ACM, 2011, pp. 41--50.

\bibitem{hou2018optimization}
Chen Hou and Qianchuan Zhao,
\newblock ``Optimization of web service-based control system for balance
  between network traffic and delay,''
\newblock {\em IEEE Transactions on Automation Science and Engineering}, vol.
  15, no. 3, pp. 1152--1162, 2018.

\bibitem{chow2015risk}
Yinlam Chow, Aviv Tamar, Shie Mannor, and Marco Pavone,
\newblock ``Risk-sensitive and robust decision-making: a cvar optimization
  approach,''
\newblock in {\em Advances in Neural Information Processing Systems}, 2015, pp.
  1522--1530.

\bibitem{gu2017deep}
Shixiang Gu, Ethan Holly, Timothy Lillicrap, and Sergey Levine,
\newblock ``Deep reinforcement learning for robotic manipulation with
  asynchronous off-policy updates,''
\newblock in {\em 2017 IEEE international conference on robotics and automation
  (ICRA)}. IEEE, 2017, pp. 3389--3396.

\bibitem{krokhmal2002portfolio}
Pavlo Krokhmal, Jonas Palmquist, and Stanislav Uryasev,
\newblock ``Portfolio optimization with conditional value-at-risk objective and
  constraints,''
\newblock {\em Journal of risk}, vol. 4, pp. 43--68, 2002.

\bibitem{di2012policy}
Dotan Di~Castro, Aviv Tamar, and Shie Mannor,
\newblock ``Policy gradients with variance related risk criteria,''
\newblock {\em arXiv preprint arXiv:1206.6404}, 2012.

\bibitem{tamar2013variance}
Aviv Tamar and Shie Mannor,
\newblock ``Variance adjusted actor critic algorithms,''
\newblock {\em arXiv preprint arXiv:1310.3697}, 2013.

\bibitem{dalal2018safe}
Gal Dalal, Krishnamurthy Dvijotham, Matej Vecerik, Todd Hester, Cosmin
  Paduraru, and Yuval Tassa,
\newblock ``Safe exploration in continuous action spaces,''
\newblock {\em arXiv preprint arXiv:1801.08757}, 2018.

\bibitem{boyd04c}
S.~Boyd and L.~Vandenberghe,
\newblock {\em Convex optimization},
\newblock Cambridge University Press, 2004.

\bibitem{Rockafellar70c}
R.~T. Rockafellar,
\newblock {\em Convex analysis},
\newblock Princeton University Press, 1970.

\bibitem{borkar1988convex}
Vivek~S Borkar,
\newblock ``A convex analytic approach to markov decision processes,''
\newblock {\em Probability Theory and Related Fields}, vol. 78, no. 4, pp.
  583--602, 1988.

\bibitem{funahashi1989approximate}
Ken-Ichi Funahashi,
\newblock ``On the approximate realization of continuous mappings by neural
  networks,''
\newblock {\em Neural networks}, vol. 2, no. 3, pp. 183--192, 1989.

\bibitem{cybenko1989approximation}
George Cybenko,
\newblock ``Approximation by superpositions of a sigmoidal function,''
\newblock {\em Mathematics of control, signals and systems}, vol. 2, no. 4, pp.
  303--314, 1989.

\bibitem{lu2017expressive}
Zhou Lu, Hongming Pu, Feicheng Wang, Zhiqiang Hu, and Liwei Wang,
\newblock ``The expressive power of neural networks: A view from the width,''
\newblock in {\em Advances in Neural Information Processing Systems}, 2017, pp.
  6231--6239.

\bibitem{lin2018resnet}
Hongzhou Lin and Stefanie Jegelka,
\newblock ``Resnet with one-neuron hidden layers is a universal approximator,''
\newblock in {\em Advances in Neural Information Processing Systems}, 2018, pp.
  6169--6178.

\bibitem{park1991universal}
Jooyoung Park and Irwin~W Sandberg,
\newblock ``Universal approximation using radial-basis-function networks,''
\newblock {\em Neural computation}, vol. 3, no. 2, pp. 246--257, 1991.

\bibitem{sriperumbudur2010relation}
Bharath Sriperumbudur, Kenji Fukumizu, and Gert Lanckriet,
\newblock ``On the relation between universality, characteristic kernels and
  rkhs embedding of measures,''
\newblock in {\em Proceedings of the Thirteenth International Conference on
  Artificial Intelligence and Statistics}, 2010, pp. 773--780.

\bibitem{bertsekas2015convex}
Dimitri~P Bertsekas and Athena Scientific,
\newblock {\em Convex optimization algorithms},
\newblock Athena Scientific Belmont, 2015.

\bibitem{sutton2000policy}
Richard~S Sutton, David~A McAllester, Satinder~P Singh, and Yishay Mansour,
\newblock ``Policy gradient methods for reinforcement learning with function
  approximation,''
\newblock in {\em Advances in neural information processing systems}, 2000, pp.
  1057--1063.

\bibitem{konda2000actor}
Vijay~R Konda and John~N Tsitsiklis,
\newblock ``Actor-critic algorithms,''
\newblock in {\em Advances in neural information processing systems}, 2000, pp.
  1008--1014.

\bibitem{arrow_hurwicz}
Kenneth~J. Arrow and Leonard Hurwicz,
\newblock {\em Studies in linear and nonlinear programming},
\newblock Stanford University Press, CA, 1958.

\bibitem{schulman2015trust}
John Schulman, Sergey Levine, Pieter Abbeel, Michael Jordan, and Philipp
  Moritz,
\newblock ``Trust region policy optimization,''
\newblock in {\em International Conference on Machine Learning}, 2015, pp.
  1889--1897.

\end{thebibliography}

\ifappend

\newpage

\appendix
\section{Proofs}



\begin{proof}[Proof of Lemma~\ref{lemma_meassure_bound}]
  Let us start by writing the left hand side of \eqref{eqn_occupation_bound} as
  \begin{equation}
\int_{\calS\times\calA} |\rho(s,a)-\rho_\theta(s,a)| \,dsda  = (1-\gamma) \int_{\calS\times\calA} \left|\sum_{t=0}^\infty \gamma^t\left( p^t_\pi(s,a)-p_\theta^t(s,a)\right)\right| \,dsda 
  \end{equation}
  Using the triangle inequality, we upper bound the previous expression as
    \begin{equation}\label{eqn_aux_occupation_bound1}
\int_{\calS\times\calA} |\rho(s,a)-\rho_\theta(s,a)| \,dsda  \leq (1-\gamma) \sum_{t=0}^\infty \gamma^t\int_{\calS\times\calA} \left| p_\pi^t(s,a)-p^t_\theta(s,a)\right| \,dsda .
    \end{equation}
    Notice that to complete the proof it suffices to show that the right hand side of the previous expression is bounded by $\epsilon/(1-\gamma)$. We next work towards that end, and we start by bounding the difference $\left| p_\pi^t(s,a)-p_\theta^t(s,a)\right|$. Notice that this difference can be upper bounded using the triangle inequality as
  \begin{align}\label{eqn_aux_bound_diff_p}
\left| p_\pi^t(s,a)-p_\theta^t(s,a)\right| &\leq p_\pi^t(s)\left| \pi(a|s)-\pi_\theta(a|s)\right|+\pi_\theta(a|s)\left|p_\pi^t(s)-p_\theta^t(s)\right|.
  \end{align}
  Since $\pi_\theta$ is an $\epsilon$-approximation of $\pi$, it follows from Definition \ref{def_universal_param} that
    \begin{equation}\label{eqn_aux_occupation_bound2}
\int_{\calS\times\calA} p_\pi^t(s)\left| \pi(a|s)-\pi_\theta(a|s)\right|\, ds da \leq\epsilon \int_{\calS} p_\pi^t(s)\, ds = \epsilon,
      \end{equation}
    where the last equality follows from the fact that $p_\pi^t(s)$ is a density. We next work towards bounding the integral of the second term in \eqref{eqn_aux_bound_diff_p}. Using the fact that $\pi_\theta(a|s)$ is a density, it follows that 
        \begin{equation}
\int_{\calS\times\calA} \pi_\theta(a|s)\left|p_\pi^t(s)-p_\theta^t(s)\right|\, ds da = \int_{\calS} \left|p_\pi^t(s)-p_\theta^t(s)\right|\, ds.
      \end{equation}
        Notice that the previous difference is zero for $t=0$ and for any $t>0$ it can be upper bounded by
                \begin{align}\label{eqn_aux_occupation_bound3}
                 \int_{\calS} \left|p_\pi^t(s)-p_\theta^t(s)\right|\, ds &\leq \int_{\calS} \int_{\calS\times\calA} p(s|s^\prime,a^\prime)\left|p_\pi^{t-1}(s^\prime,a^\prime)-p_\theta^{t-1}(s^\prime,a^\prime)\right|\, ds ds^\prime da^\prime \nonumber \\
& = \int_{\calS\times\calA} \left|p_\pi^{t-1}(s^\prime,a^\prime)-p_\theta^{t-1}(s^\prime,a^\prime)\right|\, ds^\prime da^\prime
      \end{align}
                Combining the bounds derived in \eqref{eqn_aux_occupation_bound1}, \eqref{eqn_aux_occupation_bound2}, \eqref{eqn_aux_occupation_bound3} we have that 
\begin{align}
  &  (1-\gamma) \sum_{t=0}^\infty \gamma^t\int_{\calS\times\calA} \left|\left( p_\pi^t(s,a)-p_\theta^t(s,a)\right)\right| \,dsda \leq \nonumber \\ 
&  (1-\gamma) \sum_{t=0}^\infty \gamma^t\epsilon +(1-\gamma) \sum_{t=1}^\infty \gamma^t\int_{\calS\times\calA} \left|\left( p_\pi^{t-1}(s,a)-p_\theta^{t-1}(s,a)\right)\right| \,ds da. 
    \end{align}
Notice that the first term on the right hand side of the previous expression is the sum of the geometric multiplied by $1-\gamma$. Hence we have that $ (1-\gamma) \sum_{t=0}^\infty \gamma^t\epsilon=\epsilon $. The second term on the right hand side of the previous expression is in fact the same as the term on the left hand side of the expression multiplied by the discount factor $\gamma$. Thus, rearranging the terms, the previous expression implies that 
\begin{equation}
  (1-\gamma) \sum_{t=0}^\infty \gamma^t\int_{\calS\times\calA} \left|\left( p_\pi^t(s,a)-p_\theta^t(s,a)\right)\right| \,dsda \leq \frac{\epsilon}{1-\gamma}.
\end{equation}
This completes the proof of the Lemma. 
\end{proof}

\begin{proof}[Proof of Theorem~\ref{T:parametrization}]

  Notice that the dual functions $d(\lambda)$ and $d_\theta(\lambda)$ associated to the problems \eqref{P:constrainedRL} and \eqref{P:parametric} respectively are such that for every $\lambda$ we have that
  $  d_\theta(\lambda) \leq d(\lambda)$. The latter follows from the fact that the set of maximizers of the Lagrangian for the parametrized policies is contained in the set of maximizers of the non-parametrized policies. In particular, this holds for $\lambda^\star$ the solution of the dual problem associated to \eqref{P:constrainedRL}. Hence we have the following sequence of inequalities
  \begin{equation}
D^\star = d(\lambda^\star) \geq d_{\theta}(\lambda^\star) \geq D^\star_\theta, 
    \end{equation}
  where the last inequality follows from the fact that $D^\star_\theta$ is the minimum of \eqref{P:dual_parametric}. The zero duality gap established in Theorem \ref{T:strongDuality} completes the proof of the upper bound for $D_\theta^\star$. We next work towards proving the lower bound for $D_\theta^\star$. Let us next write the dual function of the parametrized problem \eqref{P:dual_parametric} as
  \begin{equation}
 d_{\theta}(\lambda) =  d(\lambda)-\left(\max_{\pi\in\calP(\calS)} \calL(\pi,\lambda) - \max_{\theta\in\mathbb{R}^p}\calL_\theta(\theta,\lambda)\right) 
\end{equation}
  Let $\pi^\star\triangleq \argmax_{\pi\in\calP(\calS)} \calL(\pi,\lambda)$ and let $\theta^\star$ be an $\epsilon$-approximation of $\pi^\star$. Then, by definition of the maximum it follows that 
    \begin{equation}\label{eqn_relationship_duals}
 d_{\theta}(\lambda) \geq  d(\lambda)-\left( \calL(\pi^\star,\lambda) - \calL_\theta(\theta^\star,\lambda)\right) 
    \end{equation}
We next work towards a bound for $\calL(\pi^\star,\lambda)-\calL_\theta(\theta^\star,\lambda)$. To do so, notice that we can write the difference in terms of the occupation measures where $\rho^\star$ and $\rho_{\theta}^\star$ are the occupation measures associated to the the policies $\pi^\star$ and the policy $\pi_{\theta^\star}$ 
\begin{equation}
\calL(\pi^\star,\lambda)-\calL_\theta(\theta^\star,\lambda) = \int_{\calS\times\calA} \left(r_0 +\lambda^\top r\right)\left( d\rho^\star(\lambda) -d\rho_{\theta}^\star(\lambda)\right).
\end{equation}
Since $\pi_{\theta^\star}$ is by definition an $\epsilon$ approximation of $\pi^\star$ it follows from Lemma \ref{lemma_meassure_bound} that
    \begin{equation}
      \int_{\calS\times\calA} \left|d \rho^\star(\lambda)-d\rho_\theta^\star(\lambda)\right|  \leq \frac{\epsilon}{1-\gamma}.
    \end{equation}
Using the bounds on the the reward functions we can upper bound the difference $\calL(\pi^\star,\lambda)-\calL_\theta(\theta^\star,\lambda)$ by
\begin{equation}
  \calL(\pi^\star,\lambda)-\calL_\theta(\theta^\star,\lambda) \leq \left(B_{r_0}+\left\|\lambda\right\|_1B_r\right)\frac{\epsilon}{1-\gamma}.
\end{equation}
Combining the previous bound with \eqref{eqn_relationship_duals} we can lower bound $d_\theta(\lambda)$ as 
  \begin{equation}\label{eqn_aux_proof_almost_not_dualitygap}
d_\theta(\lambda)\geq d(\lambda) - \left(B_{r_0}+\left\|\lambda\right\|_1B_r\right)\frac{\epsilon}{1-\gamma}
    \end{equation}
  Let us next define $d_\epsilon(\lambda) = d(\lambda) -B_r\epsilon/(1-\gamma)\left\|\lambda\right\|_1$, and notice that in fact $d_\epsilon(\lambda)$ is the dual function associated to Problem \eqref{P:perturbed} with $\xi_i = B_r\epsilon/(1-\gamma)$ for all $i=1,\ldots,m$.
%
  %
  With this definition, \eqref{eqn_aux_proof_almost_not_dualitygap} reduces to 
    \begin{equation}
      d_{\theta}(\lambda)\geq d_{\epsilon}(\lambda) - B_{r_0}\frac{\epsilon}{1-\gamma}. 
    \end{equation}
Since the previous expression holds for every $\lambda$, in particular it holds for $\lambda_\theta^\star$, the dual solution of the parametrized problem \eqref{P:dual_parametric}. Thus, we have that 
    \begin{equation}
      D_{\theta}^\star\geq d_{\epsilon}(\lambda_\theta^\star) - B_{r_0}\frac{\epsilon}{1-\gamma}. 
    \end{equation}
    Recall that $\lambda_\epsilon^\star =\argmin d_\epsilon(\lambda)$, and use the definition of the dual function to lower bound $D_{\theta}^\star$ by 
    \begin{equation}
      D_{\theta}^\star\geq \max_{\pi\in\calP(\calS)} V_0(\pi) + \sum_{i=1}^m\lambda_{\epsilon,i}^\star\left(V_i(\pi)-c_i-B_r\frac{\epsilon}{1-\gamma}\right) - B_{r_0}\frac{\epsilon}{1-\gamma}. 
    \end{equation}
    By definition of maximum, we can lower bound the previous expression by substituting by any $\pi \in\calP(\calS)$. In particular, we select $\pi^\star$ the solution to \eqref{P:constrainedRL}
        \begin{equation}
      D_{\theta}^\star\geq V_0(\pi^\star) + \sum_{i=1}^m\lambda_{\epsilon,i}^\star\left(V_i(\pi^\star)-c_i-B_r\frac{\epsilon}{1-\gamma}\right) - B_{r_0}\frac{\epsilon}{1-\gamma}. 
    \end{equation}
        Since $\pi^\star$ is the optimal solution to \eqref{P:constrainedRL} it follows that $V_i(\pi^\star)-c_i\geq 0$ and since $\lambda_{\epsilon,i}^\star\geq 0$ the previous expression reduces to 

      \begin{equation}
      D_{\theta}^\star\geq V_0(\pi^\star) - \left(B_{r_0}+B_r\left\|\lambda_\epsilon^\star\right\|\right)\frac{\epsilon}{1-\gamma} = P^\star-\left(B_{r_0}+B_r\left\|\lambda_\epsilon^\star\right\|\right)\frac{\epsilon}{1-\gamma}
    \end{equation}
      %
    %
    %
    %
    %
  Which completes the proof of th result
\end{proof}

%
\begin{proof}[Proof of Proposition~\ref{prop_subgradient}]

Let $\lambda^\star_\theta$ be the solution of the parametrized dual problem \eqref{P:dual_parametric}. Then we can write the difference of the dual function evaluated at an arbitrary $\lambda \in\mathbb{R}^m_+$ and $\lambda_\theta^\star$ as  
  \begin{equation}
    d_\theta(\lambda) - d_\theta(\lambda_\theta^\star) = \max_{\theta}\calL_\theta(\theta,\lambda) -\max_{\theta}\calL_\theta(\theta,\lambda_\theta^\star) \leq \calL_\theta(\theta^\star(\lambda),\lambda)-\calL_\theta(\theta^\dagger(\lambda),\lambda_\theta^\star).
  \end{equation}
  It follows from  Assumption \ref{assumption_neural_network} that there exists $\delta>0$ such that $\calL_\theta(\theta^\star(\lambda),\lambda) \leq \calL_\theta(\theta^\dagger(\lambda),\lambda)+\delta$, thus we can upper bound the right hand side of the previous inequality by
  \begin{equation}
    \calL_\theta(\theta^\star(\lambda),\lambda)-\calL_\theta(\theta^\dagger(\lambda),\lambda_\theta^\star) \leq \calL_\theta(\theta^\dagger(\lambda),\lambda)+\delta-\calL(\theta^\dagger(\lambda),\lambda_\theta^\star) =  \left(\lambda-\lambda_\theta^\star\right)^\top \left(V(\theta^\dagger(\lambda))-c\right)+\delta. 
  \end{equation}
  Combining the two upper bounds completes the proof of the proposition.
\end{proof}

\begin{proof}[Proof of Theorem \ref{T:convergence}]
  We start by showing the lower bound, which in fact holds for any $\lambda$. Notice that for any $\lambda$ and by definition of the dual problem it follows that $d_\theta(\lambda)\geq D^\star_\theta$. Combining this bound with the result of Theorem \ref{T:parametrization} it follows that
        \begin{equation}
          d_\theta(\lambda) \geq P^\star-\left(B_{r_0}+\left\|\lambda^\star_\epsilon\right\|_1B_r\right)\frac{\epsilon}{1-\gamma}.
\end{equation}
        To show the upper bound we start by writing the difference between the dual multiplier $k+1$ and the solution of \eqref{P:dual_parametric} in terms of the iteration at time $k$. Since $\lambda_{\theta}^\star\in\mathbb{R}^m_+$ and using the non-expansive property of the projection it follows that 
  \begin{equation}
\left\|\lambda_{k+1}-\lambda^\star_\theta\right\|^2 \leq \left\|\lambda_{k}-\eta \left(V(\theta^\dagger(\lambda_k))-c\right)-\lambda^\star_\theta\right\|^2 
  \end{equation}
  Expanding the square and using that $B =\sum_{i=1}^m\left( B_{r_i}/(1-\gamma)-c_i\right)^2$ is a bound on the norm squared of $V(\theta)-s$ it follows that
  \begin{equation}
\left\|\lambda_{k+1}-\lambda^\star_\theta\right\|^2 \leq    \left\|\lambda_k-\lambda_\theta^\star\right\|^2 - 2\eta\left(\lambda_k-\lambda_\theta^\star\right)^\top \left(V(\theta^\dagger(\lambda_k))-c\right)+\eta^2 B. 
    \end{equation}
  Using the result of Proposition \ref{prop_subgradient} we can further upper bound the inner product in the previous expression by the difference of the dual function evaluated at $\lambda_k$ and $\lambda_\theta^\star$ plus $\delta$, the error in the solution of the primal maximization,
    \begin{equation}
\left\|\lambda_{k+1}-\lambda^\star_\theta\right\|^2 \leq    \left\|\lambda_k-\lambda_\theta^\star\right\|^2 +2\eta\left(\delta+d_\theta(\lambda_{\theta}^\star)-d_\theta(\lambda_k)\right)+\eta^2 B. 
    \end{equation}
    Defining $\alpha_k = 2(\delta+d_\theta(\lambda_{\theta}^\star)-d_\theta(\lambda_k))+\eta B$ and writing recursively the previous expression yields 
        \begin{equation}\label{eqn_dual_reduction}
\left\|\lambda_{k+1}-\lambda^\star_\theta\right\|^2 \leq    \left\|\lambda_0-\lambda_\theta^\star\right\|^2 +\eta \sum_{j=0}^k\alpha_j.
        \end{equation}
        Since $d_\theta(\lambda_\theta^\star)$ is the minimum of the dual function, the difference $d_\theta(\lambda_{\theta}^\star)-d_\theta(\lambda_k)$ is always negative. Thus, when $\lambda_k$ is not close to the solution of the dual problem $\alpha_k$ is negative. The latter implies that the distance between $\lambda_k$ and $\lambda_\theta^\star$ is reduced by virtue of \eqref{eqn_dual_reduction}. To be formal, for any $\varepsilon>0$, when $a_j>-2\varepsilon$ we have that 
          \begin{equation}
d_\theta(\lambda_j) - d_\theta(\lambda_\theta^\star) \leq \eta\frac{B}{2}+\delta+\varepsilon.
            \end{equation}
          Using the result of Theorem \ref{T:parametrization} we can upper bound $D_\theta^\star$ by $P^\star$ which establishes the neighborhood defined in \eqref{eqn_convergence_neighborhood}. We are left to show that the number of iterations required to do so is bounded by
                  \begin{equation}
          K \leq \frac{\left\|\lambda_0-\lambda_\theta^\star\right\|^2}{2\eta \varepsilon}.
        \end{equation}
To do so, let $K>0$ be the first iterate in the neighborhood \eqref{eqn_convergence_neighborhood}. Formally, $K = \min_{j\in\mathbb{N}} \alpha_j>-2\varepsilon$. Then it follows from the recursion that
        \begin{equation}
\left\|\lambda_{K}-\lambda^\star_\theta\right\|^2 \leq    \left\|\lambda_0-\lambda_\theta^\star\right\|^2 -2K\eta \varepsilon.
        \end{equation}
Since $\left\|\lambda_K-\lambda_\theta^\star\right\|^2$ is positive the previous expression reduces to $2K\eta\epsilon \leq \left\|\lambda_0-\lambda_\theta^\star\right\|^2$. Which completes the proof of the result.  
\end{proof}

\fi

\end{document}